\newcommand{\B}{\mathcal{B}}
\newcommand{\E}{\mathbb{E}}
\newcommand{\D}{\mathcal{D}}
\newcommand{\R}{\mathbb{R}}
\renewcommand{\P}{\mathcal{P}}
\newcommand{\ind}{\mathbb{I}}
\def\pdf{{\text{pdf}}}
\def\query{{\text{query\_until\_lower}}}
\def\cont{{\text{continue\_at\_min}}}
\def\pdfn{{\text{pdf}_n}}
\def\pdfe{{\text{pdf}_e}}
\def\ls{{\texttt{LS}}}
\let\cite\citep
\numberwithin{equation}{section}
\numberwithin{figure}{section}
\theoremstyle{plain}
	\newtheorem{theorem}{Theorem}[section]
	\newtheorem{corollary}[theorem]{Corollary}
\theoremstyle{definition}
	\newtheorem*{remark*}{Remark}
\title{Exploring the Loss Landscape in Neural Architecture Search}
\author[1]{Colin White}
\author[1]{Sam Nolen}
\author[1,2]{Yash Savani}
\affil[1]{%
    Abacus.AI. \texttt{\{colin, sam\}@abacus.ai}
}
\affil[2]{%
    Carnegie Mellon University. \texttt{ysavani@cs.cmu.edu}
}
\begin{document}
\maketitle

\begin{abstract}

Neural architecture search (NAS) has seen a steep rise in interest over the last
few years.
Many algorithms for NAS consist of searching through a space of architectures
by iteratively choosing an architecture, evaluating its performance by training it,
and using all prior evaluations to come up with the next choice.
The evaluation step is noisy -
the final accuracy 
varies based on the random initialization of the weights.
Prior work has focused on devising new search algorithms to handle this noise, 
rather than quantifying or understanding the level of 
noise in architecture evaluations.
In this work, we show that \emph{(1)} the simplest hill-climbing algorithm
is a powerful baseline for NAS, and 
\emph{(2)}, when the noise in popular NAS benchmark datasets 
is reduced to a minimum,
hill-climbing to outperforms 
many popular state-of-the-art algorithms.
We further back up this observation by showing that the number of local minima 
is substantially reduced as the noise decreases,
and by giving a theoretical characterization of the performance of local search in NAS.
Based on our findings, for NAS research we suggest
\emph{(1)} using local search as a baseline, and \emph{(2)}
denoising the training pipeline when possible.
\end{abstract}

\section{Introduction} \label{sec:intro}

Neural architecture search (NAS) is a widely popular area of machine learning 
which seeks to automate the development of the best neural network for a given dataset. 
Many methods for NAS have been proposed, including reinforcement learning, 
gradient descent, and Bayesian optimization~\citep{nas-survey,zoph2017neural, darts}.
Many popular NAS algorithms can be instantiated by the optimization problem
$\min_{a\in A} f(a)$, where $A$ denotes a set of architectures (the \emph{search space})
and $f(a)$ denotes the objective function for $a$, often set to the validation
accuracy of $a$ after training using a fixed set of hyperparameters.
With the release of three NAS benchmark datasets~\citep{nasbench, nasbench201, nasbench301},
the extreme computational cost for NAS is no longer a barrier, 
and it is easier to fairly compare different algorithms.
However, recent work has noticed that the training step used in these benchmarks
is quite stochastic~\cite{nasbench, nasbench301}.
In NAS, where the goal is to search over complex neural architectures, 
a noisy reward function makes the problem even more challenging.
In fact, recent innovations are designed specifically to handle the noisy objective 
function~\citep{real2019regularized, zaidi2020neural}.
A natural question is therefore, how much of the complexity of NAS can be
attributed to the noise in the training pipeline?

In this work, 
\footnote{
Our code is available at \url{https://github.com/naszilla/naszilla}.}
we answer this question by showing that the difficulty of NAS is highly
correlated with the noise in the training pipeline. 
We show that \emph{(1)} the simplest local search algorithm (hill-climbing) 
is already a strong baseline in NAS, and
\emph{(2)}, when the noise in the training pipeline is reduced to a minimum,
local search is sufficient to outperform many state-of-the-art techniques.

Local search is a simple and canonical greedy algorithm in combinatorial optimization 
and has led to famous results in the study of approximation 
algorithms~\citep{michiels2007theoretical, cohen2016local, friggstad2019local}.
However, local search has been neglected in the field of NAS; a recent paper
even suggests that it performs poorly due to the number of local minima throughout 
the search space~\citep{alphax}.
The most basic form of local search, often called the hill-climbing algorithm,
consists of starting with a random architecture
and then iteratively training all architectures in its neighborhood, 
choosing the best one for the next iteration.
The neighborhood is typically defined as all architectures which differ by one 
operation or edge.
Local search finishes when it reaches a (local or global) optimum,
or when it exhausts its runtime budget.

We show that on NAS-Bench-101, 201, 
and 301/ DARTS~\citep{nasbench, nasbench201, nasbench301, darts}, 
if the noise is reduced to a minimum,
then local search is competitive with all popular state-of-the-art
NAS algorithms.
This result is especially surprising because local search, which can be implemented in
five lines of code (Algorithm~\ref{alg:local_search}), is in stark contrast to most
state-of-the-art NAS algorithms, which have many moving parts and 
even use neural networks as subroutines~\cite{wen2019neural,shi2019multi}.
This suggests that the complexity in prior methods may have been developed to deal with
the noisy reward function.
We also experimentally show that as the noise in the training pipeline
increases, the number of local minima increases, and the basin of
attraction of the global minimum decreases. These results further
suggest that NAS becomes easier when the noise is reduced.

Motivated by these findings, we also present a theoretical study
to better understand the performance of local search under different levels of noise.
The underlying optimization problem in NAS is a hybrid between discrete
optimization, on a graph topology, and continuous optimization, on the
distribution of architecture accuracies.
We formally define a NAS problem instance by the graph topology, 
a global probability density function (PDF) on the architecture accuracies, 
and a local PDF on the accuracies between neighboring architectures, 
and we derive a set of equations which
calculate the probability that a randomly drawn architecture will converge to within $\epsilon$
of the global optimum, for all $\epsilon>0$.
As a corollary, we give equations for the expected number of local minima,
and the expected size of the preimage of a local minimum.
These results completely characterize the performance of local search.
To the best of our knowledge, this is the first result which theoretically predicts the
performance of a NAS algorithm, and may be of independent interest within discrete optimization.
We run simulations which suggest that our theoretical results predict the performance
on real datasets reasonably well.

Our findings raise a few points for the field of NAS.
Since much of the difficulty is in the stochasticity of the training pipeline, 
denoising the training pipeline as much as possible is worthwhile for
future work.
Second, our work suggests that 
local methods for NAS may be promising. 
That is, methods which explore the 
search space by iteratively making small edits to the best architectures found so far.
Furthermore, we suggest using local search as a baseline for future work.
We release our code, and we discuss our adherence to the NAS research checklist~\citep{lindauer2019best}. 

\paragraph{Our contributions.} We summarize our contributions.
\begin{itemize} [topsep=0pt, itemsep=2pt, parsep=0pt, leftmargin=5mm]
    \item 
    We show that local search is a strong baseline in its own right,
    and outperforms many state-of-the-art NAS
    algorithms across three popular benchmark datasets when the 
    noise in the training pipeline is reduced to a minimum.
    We also show that the number of local minima increases as the noise in the training pipeline increases.
    Our results suggest that making the training pipeline in NAS more
    consistent, is just as worthwhile as coming up with novel
    search algorithms.
    \item 
    We give a theoretical characterization of the properties of a dataset 
    necessary for local search to give strong performance. 
    We experimentally validate these results on real neural architecture
    search datasets.
    Our results improve the theoretical understanding of local search and
    lay the groundwork for future studies.
    \end{itemize}

\section{Related Work} \label{sec:related}

Local search has been studied since at least the 1950s in the context of the 
traveling salesman problem~\citep{bock1958algorithm, croes1958method},
machine scheduling~\citep{page1961approach},
and graph partitioning~\citep{kernighan1970efficient}.
Local search has consistently seen significant attention in 
theory~\citep{aarts1997local, balcan2020k, johnson1988easy}
and practice~\citep{bentley1992fast, johnson1997traveling}.
There is also a large variety of work in local optimization of noisy
functions, handling the noise by averaging the objective function over multiple
evaluations~\citep{rakshit2017noisy, akimoto2015analysis}, 
using surrogate models~\citep{booker1998optimization, caballero2008algorithm}, 
or using regularization~\citep{real2019regularized}.

NAS has gained popularity in
recent years~\citep{kitano1990designing, stanley2002evolving, zoph2017neural},
although the first few techniques have been around since at least the 1990s
~\citep{yao1999evolving, amoebanet}.
Common techniques include
Bayesian optimization~\citep{nasbot, auto-keras}, 
reinforcement learning \citep{zoph2017neural, enas, pnas}, 
gradient descent \citep{darts, gdas},
prediction-based~\citep{bananas, shi2019multi, white2021powerful},
evolution~\citep{maziarz2018evolutionary, real2019regularized},
and using novel encodings to improve the 
search~\citep{white2020study, yan2020does, yan2020cate}.

Recent papers have highlighted the need for fair and reproducible NAS 
comparisons~\citep{randomnas, lindauer2019best},
spurring the release of three
NAS benchmark datasets~\citep{nasbench, nasbench201, nasbench301},
each of which utilize tens of thousands of pretrained neural networks.
See the recent survey~\citep{nas-survey} for a more comprehensive overview on NAS.

There has been some prior work using local search for NAS.
\citet{elsken2017simple} use local search with network morphisms
guided by cosine annealing, which is a more complex variant.
\citet{alphax} use local search as a baseline, but kill the run
after encountering a local minimum rather than using the remaining
runtime budget to start a new run.
Concurrent work has also shown that simple local search 
is a strong baseline on NASBench-101~\citep{ottelander2020local}
for multi-objective NAS 
(where the objective is a function of accuracy and network complexity).
This work focuses on macro search rather than cell-based search,
and does not investigate the effect of noise on the performance.
The existence of this work strengthens one of our conclusions
(that local search is a strong NAS algorithm) because it is now 
independently verified.

\section{Preliminaries} \label{sec:prelim}
In this section, we formally define the local search algorithm and 
notation that will be used for the rest of the paper.
Given a set $A$, denote an objective function $\ell: A\rightarrow [0,\infty)$.
We refer to $A$ as a search space of neural architectures, 
and $\ell(v)$ as the expected validation loss of $v\in A$ over a fixed dataset and training pipeline.
When running a NAS algorithm, we have access to a noisy version of
$\ell$, i.e., when we train an architecture $a$, we receive 
loss$=\ell(v)+x$ for noise $x$ drawn from a distribution $\D_v$
(we explore different families of distributions in 
Sections~\ref{sec:experiments} and~\ref{sec:method}).
The goal is to find $v^*=\text{argmin}_{v\in A} \ell(v)$, the neural architecture with the 
minimum validation loss, or an architecture whose validation loss is within 
$\epsilon$ of the minimum, for some small $\epsilon>0$.
We define a neighborhood function $N:A\rightarrow 2^A$.
For instance, $N(v)$ might represent 
the set of all neural architectures which differ from $v$ by one operation or edge.

Local search in its simplest form (also called the hill-climbing algorithm) 
is defined as follows. 
Start with a random architecture $v$ and evaluate $\ell(v)$ by training $v$.
Iteratively train all architectures in $N(v)$, and then replace $v$ with the architecture $u$ 
such that $u=\text{argmin}_{w\in N(v)}\ell(w).$
Continue until we reach an architecture $v$
such that $\forall u\in N(v),~\ell(v)\leq \ell(u)$, i.e., we reach a local minimum.
See Algorithm~\ref{alg:local_search}.
We often place a runtime bound on the algorithm, in which case the algorithm returns the
architecture $v$ with the lowest value of $\ell(v)$ when it exhausts the runtime budget.
In Section~\ref{sec:experiments}, 
we also consider two simple variants.
In the $\query$ variant, instead of evaluating every architecture 
in the neighborhood $N(v)$ and picking the best one, 
we draw architectures $u\in N(v)$ at random without replacement
and move to the next iteration as soon as $\ell(u)<\ell(v).$
In the $\cont$ variant, we do not stop at a local minimum, instead
moving to the second-best architecture found so far and continuing until we exhaust
the runtime budget.
One final variant, which we explore in 
Appendix~\ref{app:experiments},
is choosing $k$ initial architectures at random,
and setting $v_1$ to be the architecture with the lowest objective value.

\begin{algorithm}
\caption{Local search}\label{alg:local_search}
\begin{algorithmic} 
\STATE {\bfseries Input:} Search space $A$, objective function $\ell$, \\
\quad\quad\quad neighborhood function $N$
\STATE 1. Pick an architecture $v_1\in A$ uniformly at random
\STATE 2. Evaluate $\ell(v_1)$; denote a dummy variable $\ell(v_0)=\infty$; \\
\quad set $i=1$
\STATE 3. While $\ell(v_i)<\ell(v_{i-1}):$
\begin{enumerate}[label=\roman*., itemsep=1pt, parsep=1mm, topsep=1pt, leftmargin=8mm]
    \item Evaluate $\ell(u)$ for all $u\in N(v_i)$
    \item Set $v_{i+1}=\text{argmin}_{u\in N(v_i)} \ell(u)$; set $i=i+1$
\end{enumerate}
\STATE {\bfseries Output:} Architecture $v_i$
\end{algorithmic}
\end{algorithm}

\paragraph{Notation.}
Now we define the notation used in Sections~\ref{sec:experiments} and~\ref{sec:method}.
Given a search space $A$ and a neighborhood function $N$,
we define the neighborhood graph $G_N=(A, E_N)$ such that for $u,v\in A$, 
the edge $(u,v)$ is in $E_N$ if and only if $v\in N(u)$.
We assume that $v\in N(u)$ implies $u\in N(v)$, therefore, the neighborhood graph is undirected.
We only consider symmetric neighborhood functions, that is, $v\in N(u)$ implies $u\in N(v)$. 
Therefore, we may assume that the neighborhood graph is undirected.
Given $G$, $N$, and a loss function $\ell$, define $\ls:A\rightarrow A$ such
that $\forall v\in A$, $\ls(v)=\text{argmin}_{u\in N(v)} \ell(u)$ if 
$\min_{u\in N(v)}\ell(u)<\ell(v)$, and $\ls(v)=\emptyset$ otherwise.
In other words, $\ls(v)$ denotes the architecture after performing one iteration 
of local search starting from $v$. 
See Figure~\ref{fig:local_search} for an example.
For integers $k\geq 1$, recursively define $\ls^k(v)=\ls(\ls^{k-1}(v))$.
We set $\ls^0(v)=v$ and denote $\ls^*(v)=\min_{k\mid \ls^k(v)\neq\emptyset}\ls^k(v)$,
that is, the output when running local search to convergence, starting at $v$.
Similarly, define the preimage $\ls^{-k}(v)=\{u\mid \ls^{k}(u)=v\}$ for integers $k\geq 1$
and $\ls^{-*}(v)=\{u\mid \exists k\geq 0\text{ s.t. }\ls^{-k}(u)=v\}$.
That is, $\ls^{-*}(v)$ is a multifunction which defines the set of all points $u$ 
which reach $v$ at some point during local search.
We refer to $\ls^{-*}$ as the full preimage of $v$.

\section{Experiments} \label{sec:experiments}

In this section, we discuss our experimental setup and results.
To promote reproduciblity,
we discuss how our experiments follow the best practices 
checklist~\citep{lindauer2019best} 
in Appendix~\ref{app:experiments},
and we release our code
at \url{https://github.com/naszilla/naszilla}.
We start by describing the benchmark datasets used in our experiments.

\paragraph{NAS benchmarks.}
To conduct our experiments, 
we use three of the most popular NAS benchmarks: NASBench-101, 201, and 301/DARTS.
NASBench-101 consists of over 423,000 
unique neural architectures with precomputed validation 
and test accuracies for 108 epochs on CIFAR-10. 
The cell-based search space consists of five nodes which can take on any DAG
structure, and each node can be one of three operations.
Each architecture was trained a total of three times using different random seeds.
The NASBench-201 dataset consists of $5^6=15,625$ unique neural architectures, 
with precomputed validation and test accuracies for 200 epochs on 
CIFAR-10, CIFAR-100, and ImageNet-16-120.
The search space consists of a cell which is a complete directed acyclic graph 
over 4 nodes. Therefore, there are $\binom{4}{2}=6$ edges.
Each \emph{edge} can be one of five operations.
As in NASBench-101, on each dataset, each architecture was trained three times
using different random seeds.

The DARTS~\citep{darts} search space is a popular search space for 
large-scale cell-based NAS experiments on CIFAR-10.
The search space contains roughly $10^{18}$ architectures, consisting 
of two cells: a convolutional cell and a reduction cell, each with six nodes. 
The first two nodes are input from previous layers, and the last four nodes 
can take on any DAG structure such that each node has degree two.
Each edge can take one of eight operations.
Recently, a surrogate benchmark, dubbed NASBench-301~\cite{nasbench301}, 
has been created for the DARTS search space.
The surrogate benchmark is created using an 
ensemble of XGBoost models~\cite{xgboost}, each initialized with different
random weights.

\paragraph{Local search performance.}
We evaluate the relative performance of local search for NAS in settings 
with and without noise.
In a real-world NAS experiment, noise reduction can be achieved by modifying 
the training hyperparameters, which we discuss later in this section.
However, modifying the hyperparameters of the NASBench architectures is 
impractical due to the extreme computational cost needed to create the 
benchmarks~\citep{nasbench, nasbench201}. Instead, we artificially remove
much of the noise in these benchmarks using two different techniques.
On NASBench-101 and 201, 
where each architecture was independently trained three times,
the standard way to use the benchmark is to draw validation accuracies
at random. However, for each architecture, we can average all three
validation accuracies to obtain a less noisy estimate.
On NASBench-301, where the standard way to evaluate architectures is by 
using one estimate from the ensemble uniformly at random, we can
take the mean of all of the ensemble estimates.
This is shown to be less noisy even than the data used to train the 
ensemble itself~\citep{nasbench301}.
In general, we can easily control the noise of NASBench-301 by returning
the mean of the ensemble estimates plus a random normal variable with mean
0 and standard deviation equal to the standard deviation of the ensemble
estimates, multiplied by a constant $x$. $x=0$ corresponds to the denoised
setting, while $x=1$ corresponds to the standard setting.

We compare local search to random search, 
regularized evolution~\citep{real2019regularized},
Bayesian optimization, and BANANAS~\citep{bananas}.
For every algorithm, we used the code directly from the 
corresponding open source repositories.
For more details on the implementations, 
see Appendix~\ref{app:experiments}.
We gave each algorithm a budget of 300 evaluations.
For each algorithm, we recorded the test loss of the architecture with the best validation 
loss that has been queried so far.
We ran 200 trials of each algorithm and averaged the results.
For local search, we set $N(v)$ to denote all architectures which differ by one operation or edge.
If local search converged before its budget, it started a new run.
On NASBench-101 and 301, we used the $\query$ variant of local search, 
and on NASBench-201, we used the $\cont$ variant.
See Figure~\ref{fig:ls_baselines_201}.
On both NASBench-101 and 301, local search outperforms all other algorithms
when the noise is minimal,
amd performs similarly to Bayesian optimization in the standard
setting.
We include the results for all three datasets of NASBench-201 
in Appendix~\ref{app:experiments}.

\paragraph{Local minima statistics.}
Now we further show that denoised NAS is a simpler optimization problem
by computing statistics about the loss landscape of the noisy and
denoised search spaces.
We start by running experiments on NASBench-201.
Since this benchmark is only size 15625, the global minimum is known,
which allows us to compute the percent of architectures that converge
to the global minimum when running local search.
We also compute the number of local minima and average number of iterations
of local search before convergence.
We run experiments using the standard and denoised versions of NASBench-201
(defined earlier in this section), and we also use a fully randomized
version by replacing the validation error for each architecture 
with a number drawn uniformly from  $[0,1].$
For each experiment, we started local search from \emph{all} 15625 initial seeds
for local search, and averaged the results.
See Table~\ref{tab:nasbench_201}.
On the denoised search space, almost half of the 15625 architectures
converge to the global minimum, but under 7\% reach the global minimum on the
standard search space.
In Figure~\ref{fig:local_search}, we give a visualization of the
topologies of the denoised and fully random search spaces.

\begin{table}
\centering
\caption{Avg.\ num.\ of iterations until convergence, num.\ of local minima, and percent 
of initial architectures to reach the global minimum, for CIFAR-10 on NASBench-201.}
\begin{minipage}[c]{.48\textwidth}
\setlength\tabcolsep{0pt}
\begin{tabular*}{\textwidth}{l @{\extracolsep{\fill}}*{8}{S[table-format=1.4]}} 
\toprule
\multicolumn{1}{c}{Version}
& \multicolumn{1}{c}{\# iters} 
& \multicolumn{1}{c}{\# local min.} & 
\multicolumn{1}{c}{\% reached global min.} \\
\midrule
Denoised & 5.36 & \hspace{1cm}21 & 47.4 \\
Standard & 4.97 & \hspace{1cm}55 & 6.71 \\
Random & 2.56 & \hspace{.8cm}616 & 0.717 \\
\bottomrule
\end{tabular*} 
\label{tab:nasbench_201}
\end{minipage}
\end{table}

Finally, we run experiments on NASBench-301. 
Due to the extreme size ($10^{18}$), the global minimum is
not known. However, as described above, the surrogate nature of NASBench-301
allows for a more fine-grained control of the noise.
In Figure~\ref{fig:301_trends}, we plot the performance of NAS algorithms
with respect to the level of noise in the search space.
We also show that the average number of local search iterations needed for
convergence decreases with noise.

\begin{figure*}
\centering %
\includegraphics[width=0.8\textwidth]{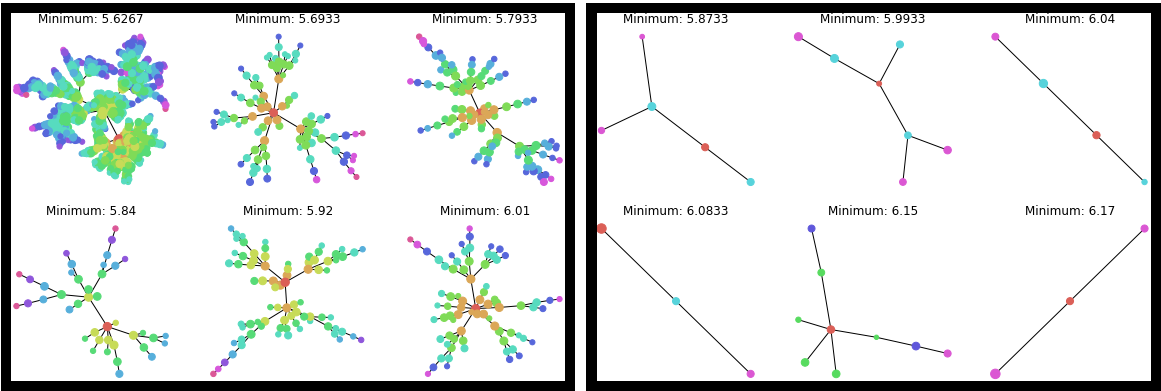}
\caption{
The local search tree for the architectures with the six lowest test losses
(colored red) on CIFAR-10 on NASBench-201, denoised (left) or fully random (right).
Each edge represents an iteration of local search, from the colder-colored node to the warmer-colored
node. While 47.4\% of architectures reach the global minimum in the denoised version, only
0.71\% of architectures reach the global minimum in the random version.
}
\label{fig:local_search}
\end{figure*}

\begin{figure*}
\centering %
\includegraphics[width=0.33\textwidth]{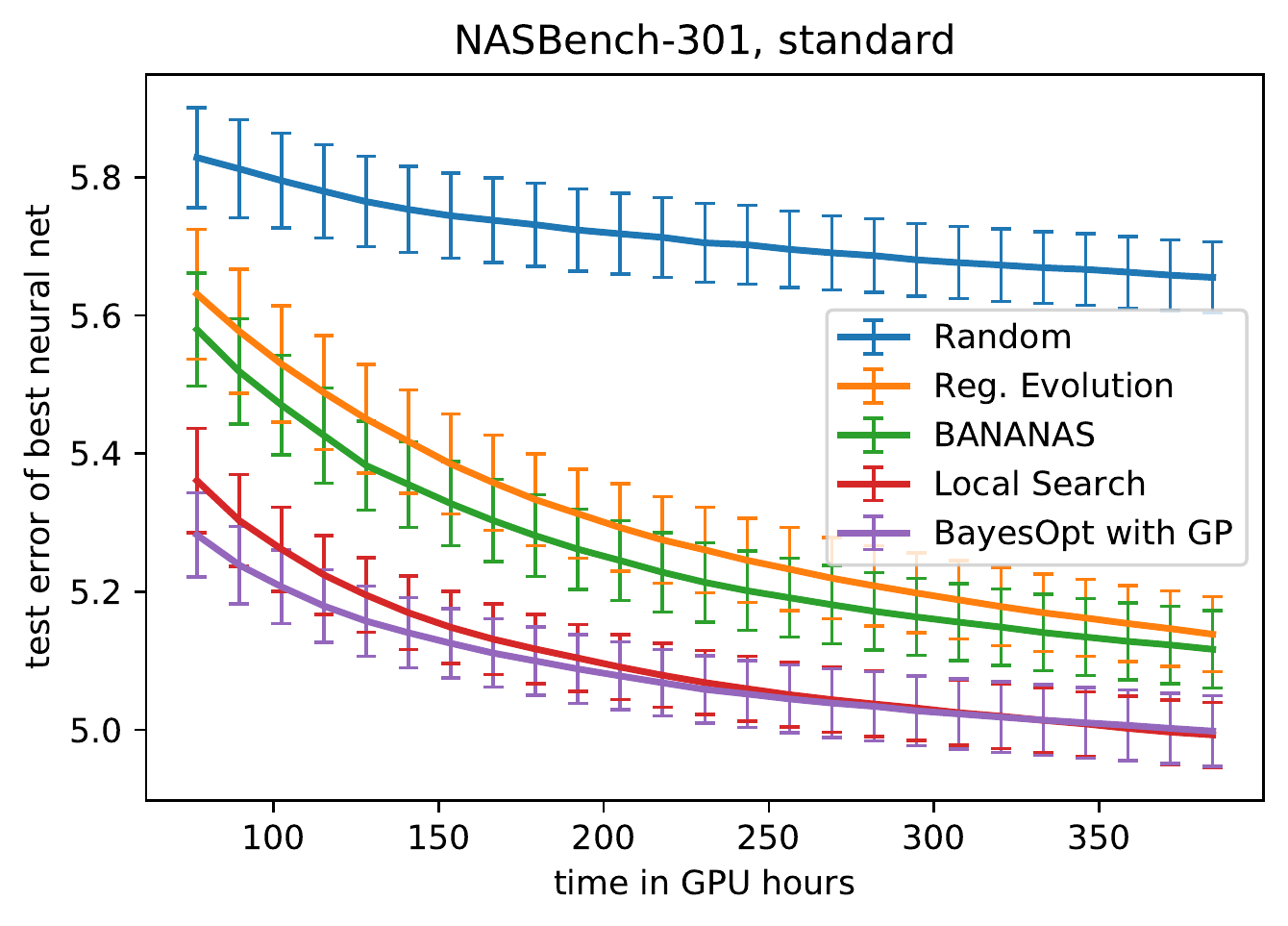}
\hspace{-3pt}
\includegraphics[width=0.33\textwidth]{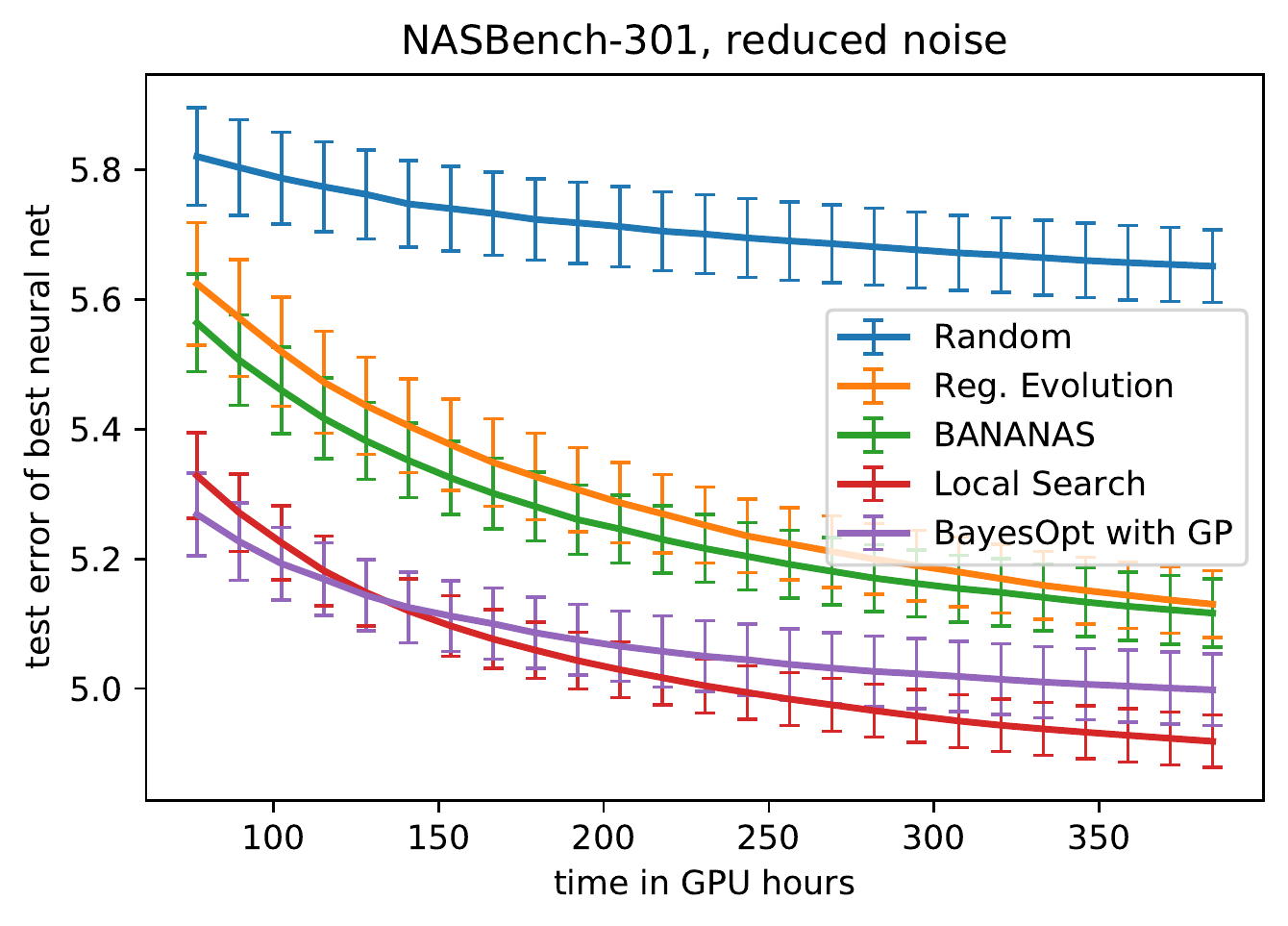}
\hspace{-3pt}
\includegraphics[width=0.33\textwidth]{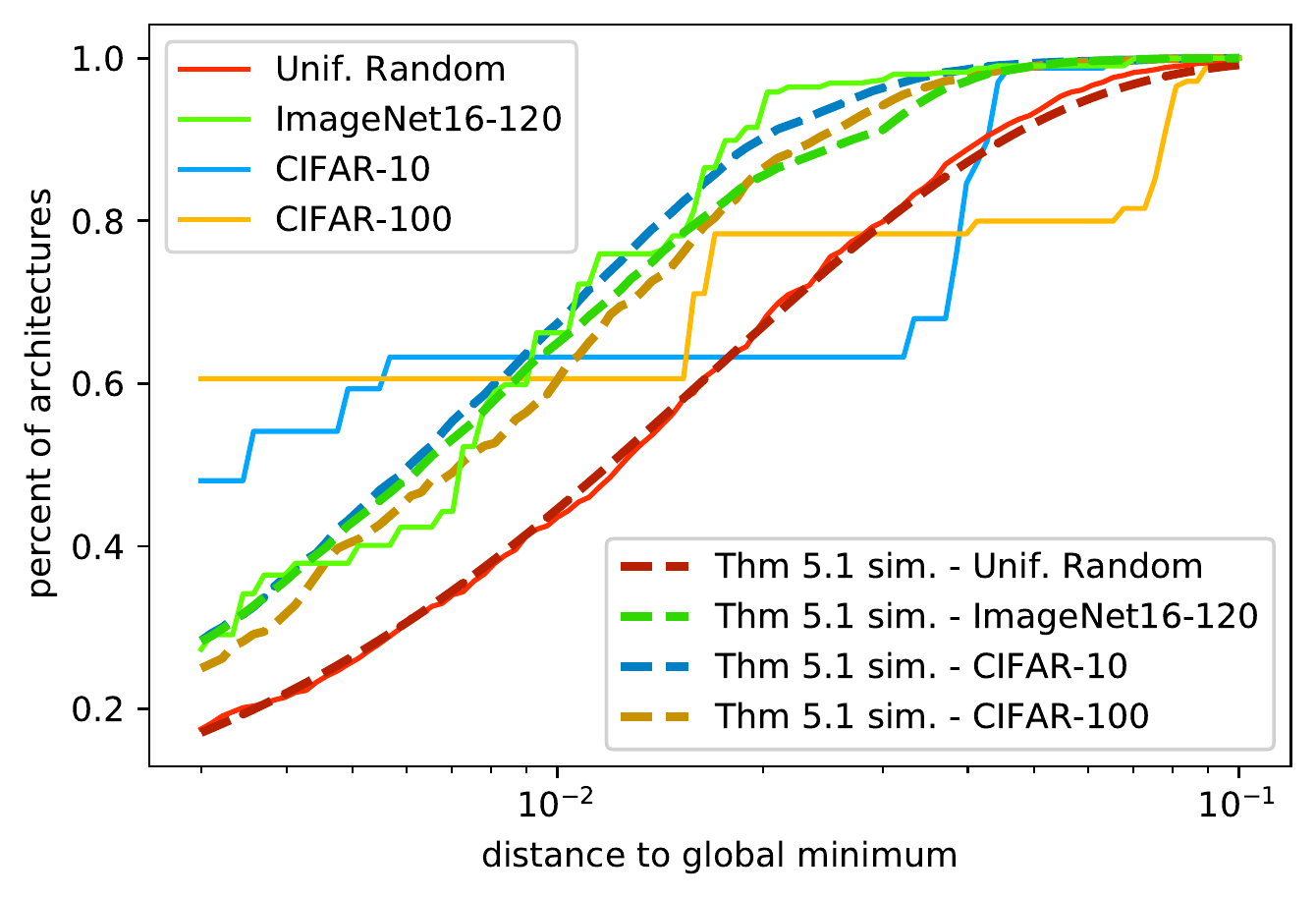}
\includegraphics[width=0.33\textwidth]{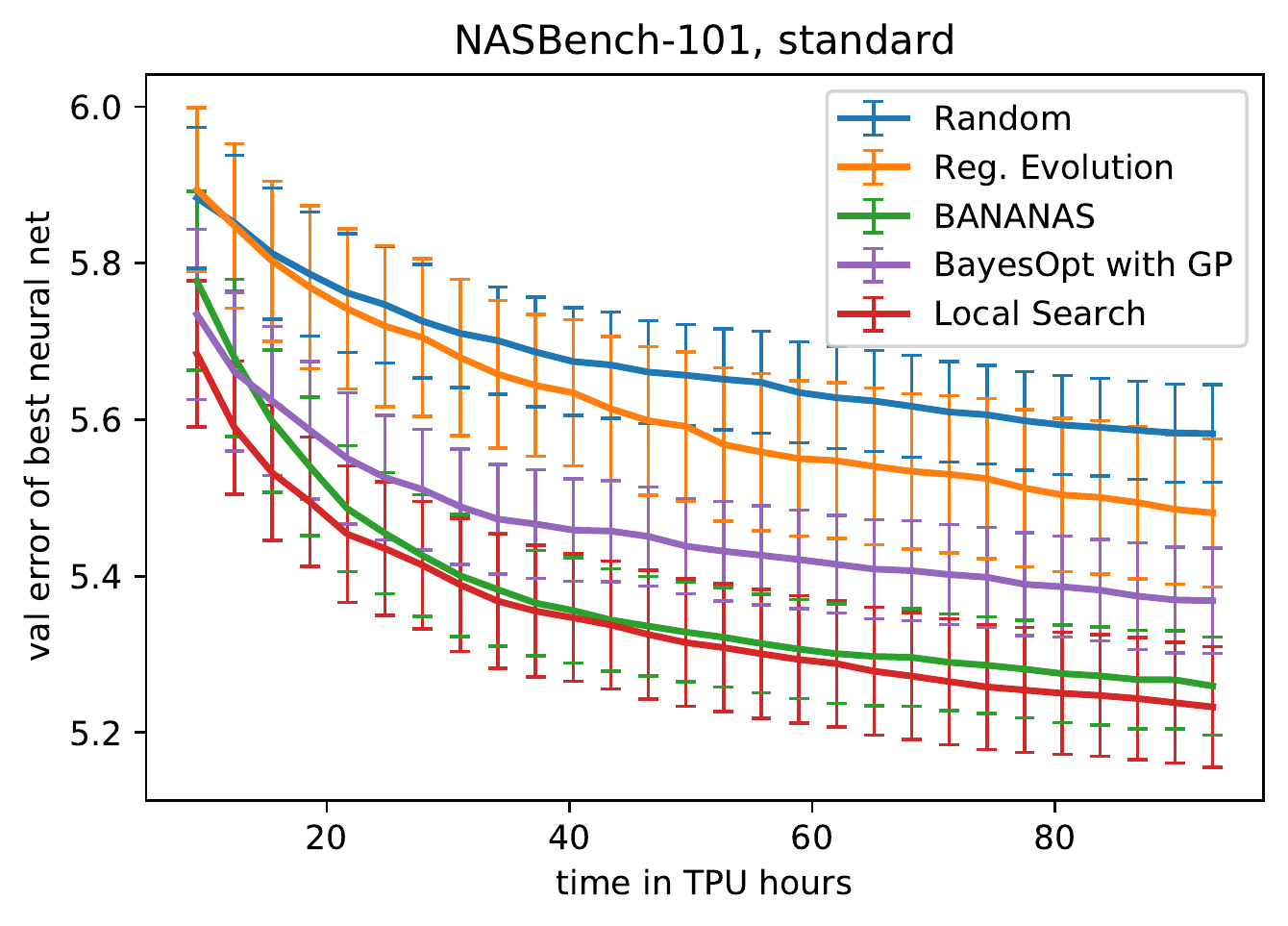}
\hspace{-3pt}
\includegraphics[width=0.33\textwidth]{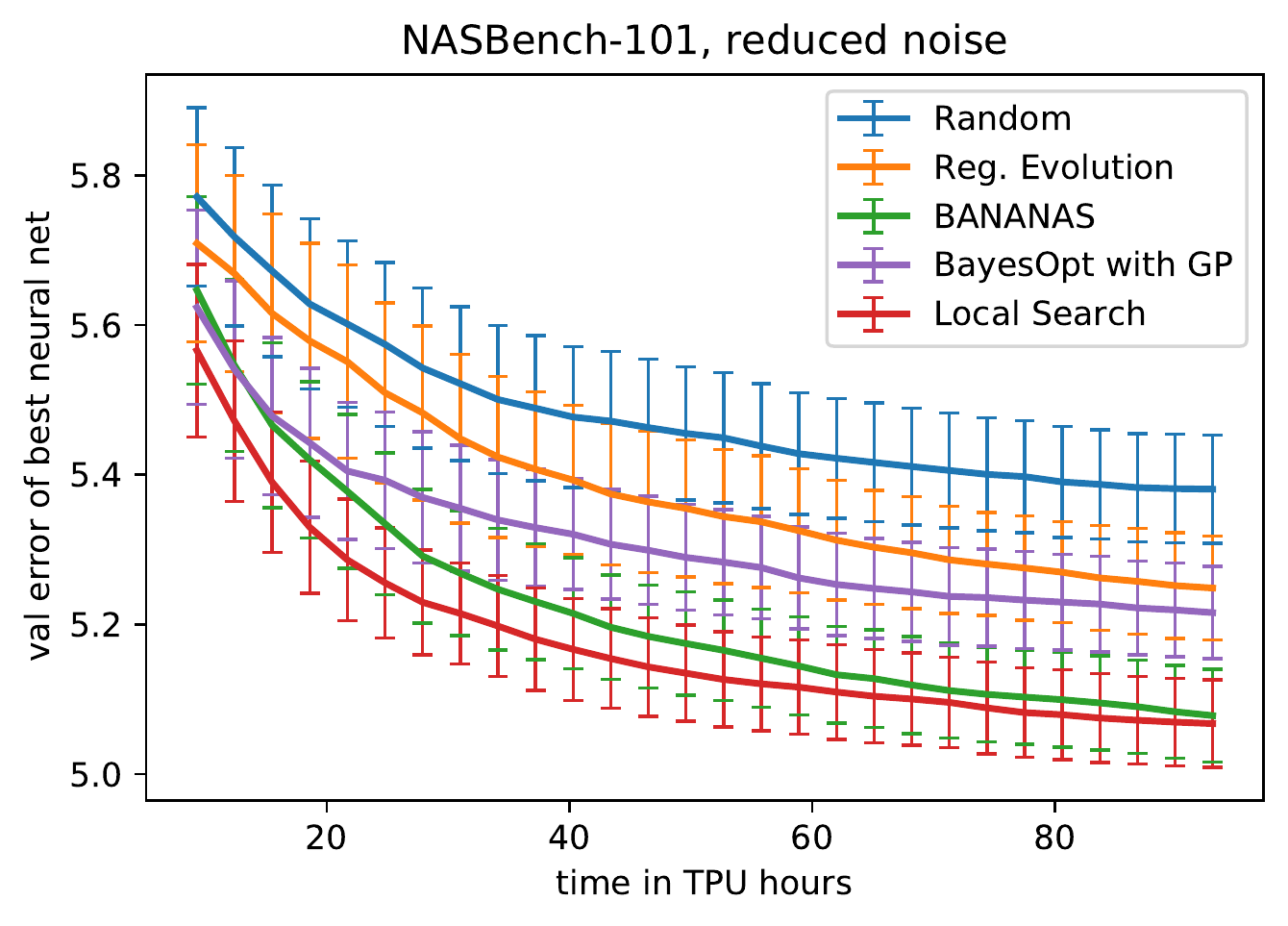}
\hspace{-3pt}
\includegraphics[width=0.33\textwidth]{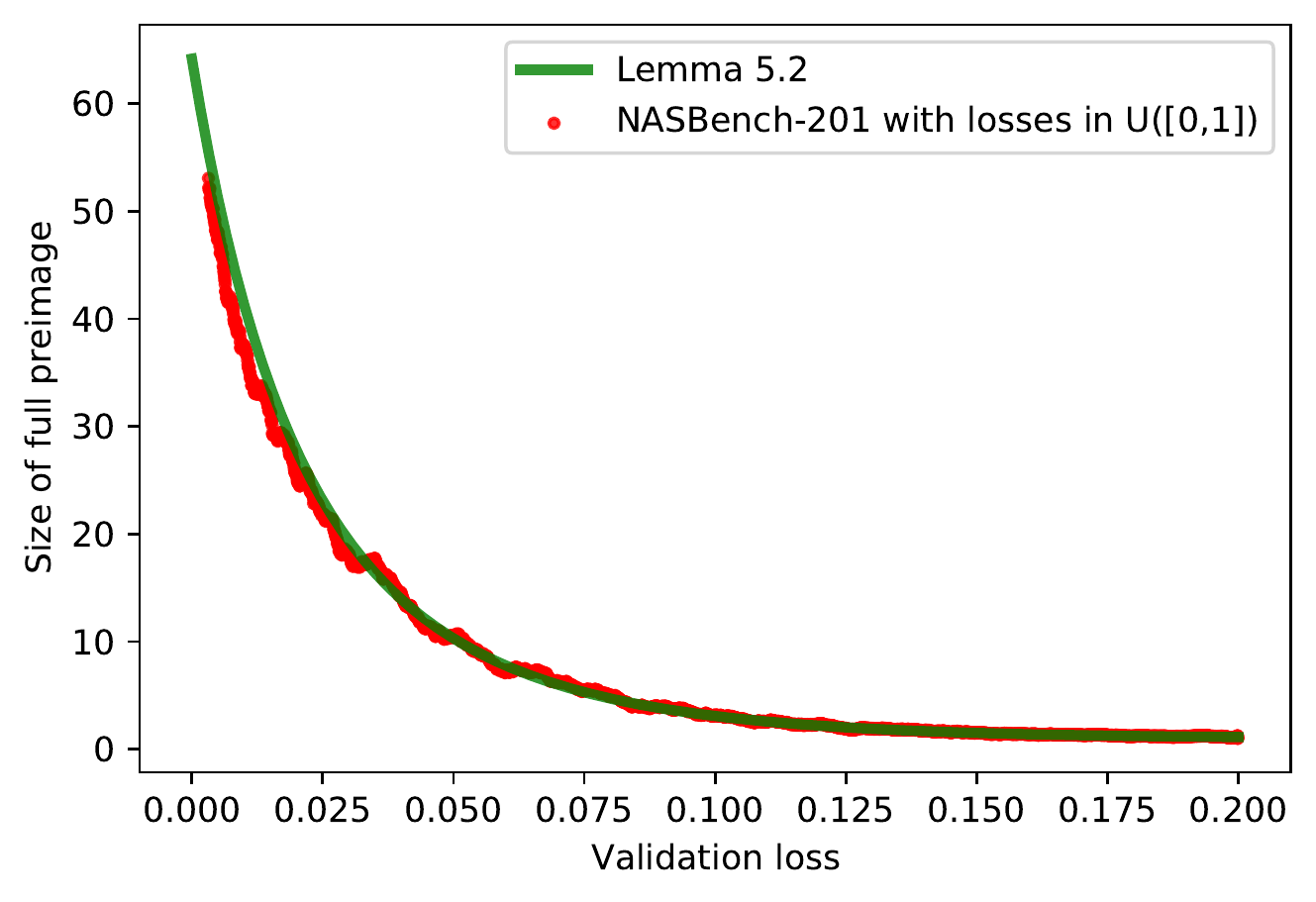}
\caption{
Performance of NAS algorithms on 
standard and denoised versions of NASBench-101 (top left/middle)
and NASBench-301 (bottom left/middle).
Probability that local search will converge to within $\epsilon$
of the global optimum, compared to Theorem~\ref{thm:prob_opt} (top right).
Validation loss vs.\ size of preimages, compared to Lemma~\ref{lem:gen_eqns} (bottom right).
}
\label{fig:ls_baselines_201}
\end{figure*}

\begin{figure*}
\centering %
\includegraphics[width=0.33\textwidth]{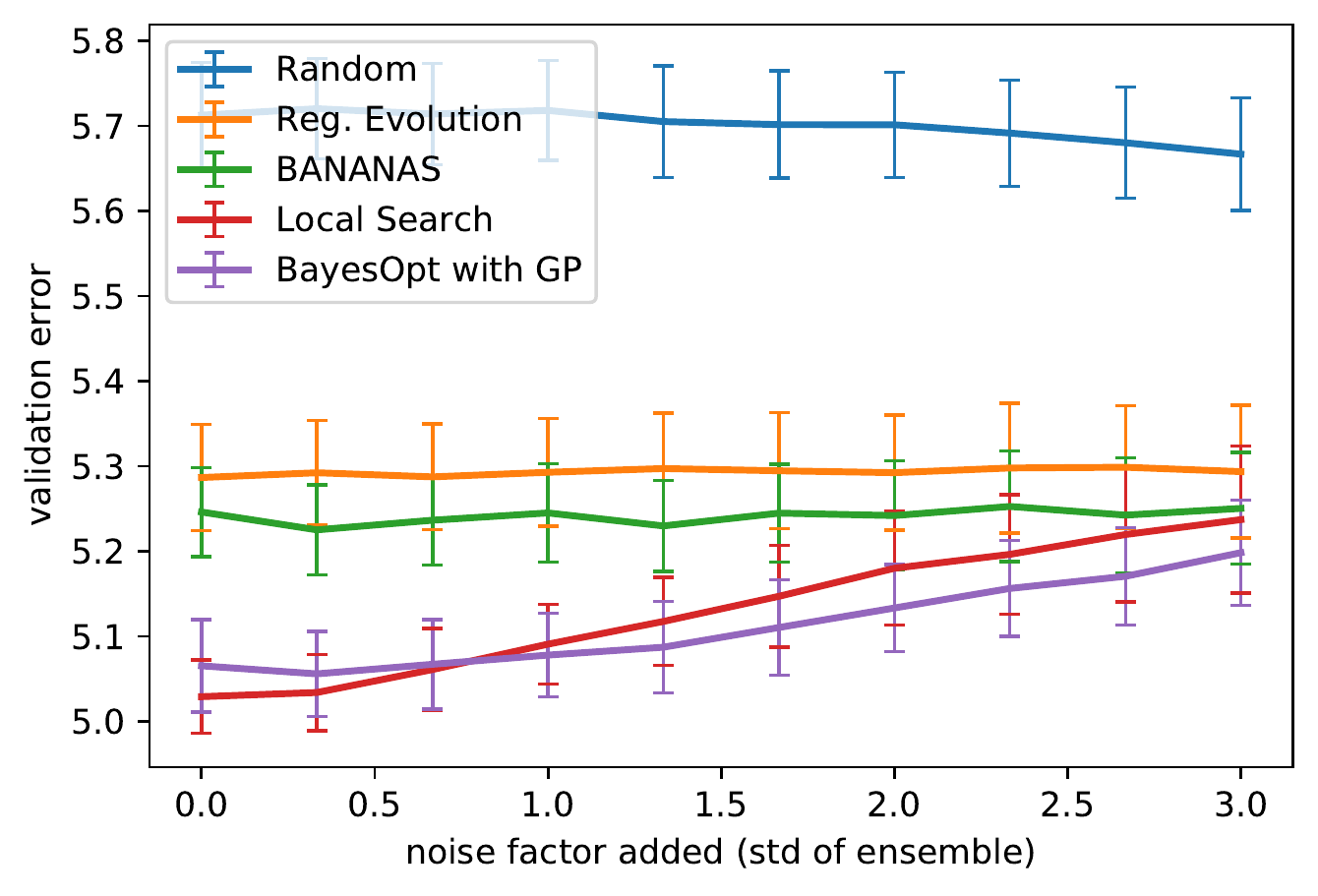}
\hspace{-3pt}
\includegraphics[width=0.33\textwidth]{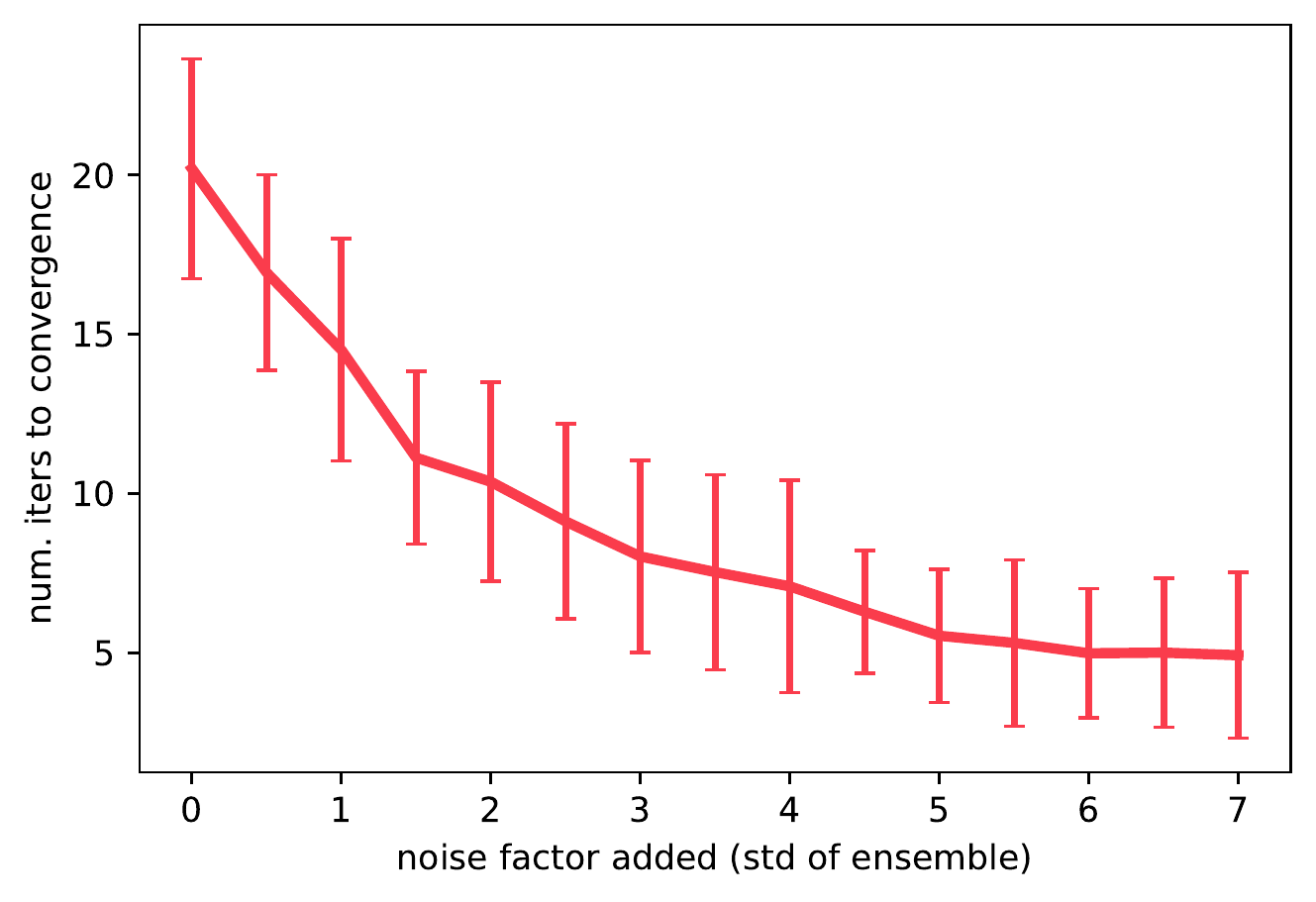}
\hspace{-3pt}
\includegraphics[width=0.33\textwidth]{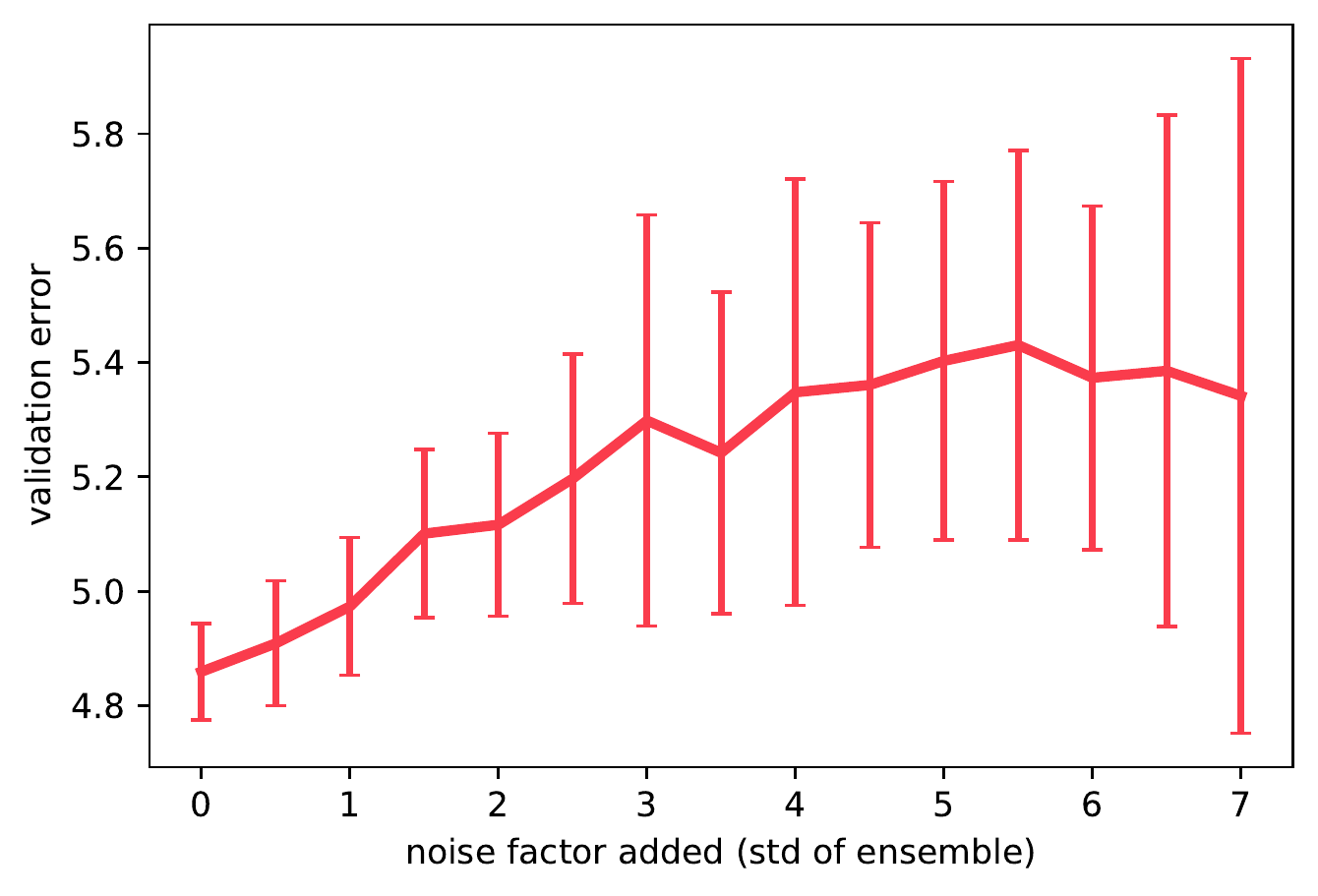}
\caption{
Amount of noise present in the architecture evaluation step of NASBench-301 
vs.\ performance of NAS algorithms (left), iterations (middle), 
and validation error (right).
}
\label{fig:301_trends}
\end{figure*}

\paragraph{Discussion.}
The simple local search algorithm achieves competitive performance on all NAS
benchmarks, beating out many popular algorithms.
Furthermore, we see a distinct trend across different benchmarks showing that
local search performs best (relative to other algorithms) 
when the noise in the training pipeline is reduced to a minimum.
Further experimentation shows that with less noise, there are fewer local minima
and local search takes more iterations to converge.
These results suggest that NAS becomes substantially easier when the noise is
reduced - enough for a very simple algorithm to achieve strong performance.
Since local search can be implemented in five lines of code, 
we encourage local search to be used as a benchmark in future work.
We also suggest denoising the noise in the training pipeline.
This can be achieved by techniques such as cosine annealing the 
learning rate~\citep{loshchilov2016sgdr}, batch normalization~\citep{ioffe2015batch},
and regularization techniques such as dropout~\citep{baldi2013understanding} 
and early-stopping~\citep{prechelt1998early}.

\section{Theoretical characterization} \label{sec:method}

In this section, we give a theoretical analysis of local search for NAS,
including a complete characterization of its performance.
We present a general result which can be applied to any NAS search space.
We also give an experimental validation of our results at the end of the
section, which suggests that our theoretical results predict the performance
of real datasets reasonably well.

In a NAS application,
the topology of the search space is fixed and discrete, 
while the distribution of validation losses
is randomized and continuous,
due to the non-deterministic nature of training a neural network.
Therefore, we assume that the validation loss for a trained architecture 
is sampled from
a global probability distribution, and for each architecture, 
the validation losses of its neighbors are sampled from a local probability distribution.
%
%
Recall the definitions of $G_N$ and $\ls$ from the end of
Section~\ref{sec:prelim}.
Given a graph $G_N=(A, E_N)$, each node $v \in A$ has a loss 
$\ell(v) \in \R$ sampled from a PDF which we denote by $\pdfn$. 
For any two neighbors $(v, u) \in E_N$, the PDF for the validation loss $x$ 
of architecture $u$ is given by $\pdfe(\ell(v), x)$.
Choices for the distribution $\pdfe$ are constrained by the 
fixed topology of the search space, as well as the distribution $\pdfn$.
In Appendix~\ref{app:method}, 
we discuss this further by formally defining
measurable spaces for all random variables in our framework.

Our main result is a formula for the fraction of nodes in the search space which 
are local minima, as well as a formula for the fraction of nodes $v$ 
such that the loss of $\ls^*(v)$ 
is within $\epsilon$ of the loss of the global optimum, for all $\epsilon\geq 0.$
In other words, we give a formula for the probability that the local search algorithm 
outputs a solution that is close to optimal.
Note that such a formula characterizes the performance of local search.
We give the full proofs for all of our results 
in Appendix~\ref{app:method}.
For the rest of this section, 
we assume for all $v \in A$, $|N(v)|=s$, and
we assume $G_N$ is vertex transitive
(given $u,v\in A$, there exists an automorphism of $G_N$ which maps $u$ to $v$).
Let $v^*$ denote the architecture with the global minimum loss, therefore
the support of the distribution of validation losses is a subset of $[\ell(v^*),\infty).$ 
That is, $\int_{\ell(v)}^\infty \pdfn(v)dv=1.$ 
Technically, the integrals in this section are Lebesgue integrals. However, we use the more standard Riemann-Stieltjes notation for clarity. We also slightly abuse notation and
define $\ls^{-*}(v)=\ls^{-*}(x)$ when $\ell(v)=x.$
In the following statements, we assume there is a fixed graph $G_N$, and the 
validation accuracies are randomly assigned from a distribution defined by $\pdfn$ 
and $\pdfe$. Therefore, the expectations are over the random draws from $\pdfn$ and $\pdfe$.
\footnote{
In particular, given a node $v$ with validation loss $\ell(v)$
the probability distribution for the validation loss of a neighbor depends only 
on $\ell(v)$ and $\pdfe$, which makes the local search procedure similar to a 
Markov process.  Our experiments in Figure~\ref{fig:ls_baselines_201}
suggest this is a reasonable assumption in practice.
}

\begin{restatable}{rethm}{probopt}\label{thm:prob_opt}
Given $|A| = n$, $\ell$, $s$, $\epsilon$, $\pdfn$, and $\pdfe$,
\begin{align*}
&\E[|\{v \in A\mid \ls^*(v) = v\}|]\\
&=n \int_{\ell(v^*)}^\infty \pdfn(x) 
\left(\int_{x}^\infty \pdfe(x, y) dy \right)^s dx,\text{ and}
\end{align*}
\begin{align*}
&\E[|\{v \in A\mid \ell(\ls^*(v))-\ell(v^*)\leq\epsilon\}|]\\
&=n \int_{\ell(v^*)}^{\ell(v^*)+\epsilon}
\pdfn(x)\left(\int_{x}^\infty \pdfe(x,y) dy\right)^s\\
&\quad~\cdot\E[|\ls^{-*}(x)|]dx.
\end{align*}
\end{restatable}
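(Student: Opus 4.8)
The plan is to compute each expectation by linearity over the $n$ vertices, using vertex transitivity to reduce to a single representative node $v$, and then conditioning on the value $\ell(v) = x$ drawn from $\pdfn$.

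First I would handle the count of local minima. By linearity of expectation, $\E[|\{v \in A \mid \ls^*(v) = v\}|] = \sum_{v \in A} \Pr[\ls(v) = \emptyset]$, and by vertex transitivity every term is equal, so this is $n \cdot \Pr[\ls(v) = \emptyset]$ for a fixed $v$. Now condition on $\ell(v) = x$; the node $v$ is a local minimum precisely when all $s$ of its neighbors have loss larger than $x$. Under the modeling assumption (stated in the footnote) that, conditioned on $\ell(v) = x$, the neighbors' losses are drawn independently from $\pdfe(x, \cdot)$, the probability that a single neighbor exceeds $x$ is $\int_x^\infty \pdfe(x, y)\, dy$, so the probability all $s$ exceed $x$ is $\left(\int_x^\infty \pdfe(x, y)\, dy\right)^s$. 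Integrating over $x \sim \pdfn$ gives the first formula.

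For the second formula I would again use linearity and vertex transitivity to write the expected count as $n \cdot \Pr[\ell(\ls^*(v)) - \ell(v^*) \le \epsilon]$ for a fixed $v$. The key structural observation is that $\ell(\ls^*(v)) - \ell(v^*) \le \epsilon$ happens exactly when $v$ lies in $\ls^{-*}(w)$ for some local minimum $w$ with $\ell(w) \le \ell(v^*) + \epsilon$. So I would instead count from the perspective of the target local minima: $\E[|\{v \mid \ell(\ls^*(v)) - \ell(v^*) \le \epsilon\}|] = \E\big[\sum_{w \text{ loc.\ min.},\ \ell(w) \le \ell(v^*)+\epsilon} |\ls^{-*}(w)|\big]$, since the sets $\ls^{-*}(w)$ over distinct local minima $w$ partition $A$. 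Writing this sum as $\sum_{w \in A} \ind[w \text{ loc.\ min.}]\, \ind[\ell(w) \le \ell(v^*)+\epsilon]\, |\ls^{-*}(w)|$, applying linearity and vertex transitivity reduces it to $n$ times the expectation for a single node $w$; conditioning on $\ell(w) = x$ then factors the local-minimum indicator into $\left(\int_x^\infty \pdfe(x,y)\, dy\right)^s$ as before, restricts $x$ to $[\ell(v^*), \ell(v^*)+\epsilon]$, and leaves the factor $\E[|\ls^{-*}(x)|]$ (using the abuse of notation $\ls^{-*}(w) = \ls^{-*}(x)$ when $\ell(w) = x$). Integrating against $\pdfn(x)$ yields the stated formula.

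The main obstacle I anticipate is justifying the factorization in the second formula rigorously: I need that, conditioned on $\ell(w) = x$, the event that $w$ is a local minimum and the size of its full preimage $|\ls^{-*}(w)|$ interact only through $x$, so that the conditional expectation splits as $\left(\int_x^\infty \pdfe(x,y)\,dy\right)^s \cdot \E[|\ls^{-*}(x)|]$. This requires care about the dependence structure — the preimage depends on losses of vertices at arbitrary distance from $w$, while the local-minimum event depends only on $w$'s neighbors — and is where the Markov-like independence assumption and the measurability setup deferred to Appendix~\ref{app:method} do the real work. The rest (linearity, vertex transitivity to equalize terms, Fubini to exchange the integral over $x$ with the expectation) is routine.
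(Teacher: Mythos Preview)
Your proposal is correct and matches the paper's approach essentially step for step: linearity of expectation plus vertex transitivity to pull out the factor $n$, conditioning on $\ell(v)=x$ drawn from $\pdfn$, and using the independence assumption on neighbor losses to get the factor $\left(\int_x^\infty \pdfe(x,y)\,dy\right)^s$. For the second formula, your partition-by-basins argument (rewriting the count as $\sum_{w\text{ loc.\ min.}}|\ls^{-*}(w)|$ over $w$ with $\ell(w)\le\ell(v^*)+\epsilon$) is exactly the idea the paper uses, though the paper states it more tersely by simply ``weighting the integral by the expected size of $v$'s full preimage''; your version makes the change of perspective from starting node to target minimum explicit, and your caveat about the conditional factorization is well placed---the paper handles that point by appeal to the Markov-like modeling assumption rather than by a separate argument.
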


\begin{proof}[\textbf{Proof sketch.}]


To prove the first statement,
we introduce an indicator random variable 
to test if the 
architecture is a local minimum: $I(v) = \ind\{\ls^*(v) = v\}.$ Then
\begin{align*}
&\E[|\{v \in A\mid \ls^*(v) = v\}|]\\
&= n \cdot P(\{v \in A \mid I(v) = 1\}) \\
&= n\int_{\ell(v^*)}^{\infty} \pdfn(x) \left(\int_{x}^{\infty} \pdfe(x,y) dy\right)^s dx.
\end{align*}
Intuitively, in the proof of the second statement, we follow similar reasoning but multiply
the probability in the outer integral by the expected size of $v$'s full preimage to weight
the integral by the probability a random point will converge to $v$.
Formally, we introduce an indicator random variable on the architecture space 
that tests if a node will terminate on a local minimum that is
within $\epsilon$ of the global minimum:
\begin{align*}
    I_\epsilon(v) &= \ind\{\ls^*(v) = u \land l(u) - l(v^*) \leq \epsilon\} \\
\end{align*}
We use this random variable along with the first statement of the theorem,
to prove the second statement.
\begin{align*}
&\E[|\{v \in A \mid \ell(\ls^*(v))-\ell(v^*)\leq\epsilon\}|]\\
&= n \cdot P(\{I_\epsilon = 1\})\\
&= n\int_{\ell(v^*)}^{\ell(v^*)+\epsilon}\pdfn(x)
\left(\int_{\ell(v)}^\infty \pdfe(x, y) dy\right)^s\\
&\quad~\cdot\E[|\ls^{-*}(x)|]dx
\end{align*}
\end{proof}

In Appendix~\ref{app:method}, we use Theorem~\ref{thm:prob_opt}
along with Chebyshev's Inequality~\citep{chebyshev1867valeurs}
to show that, in the case where the validation accuracy of each architecture
has Gaussian noise, the expected number of local minima can be bounded in terms of
the standard deviation of the noise.
In the next lemma, we derive a recursive equation for $|\ls^{-*}(v)|.$
We define the \emph{branching fraction} of graph $G_N$ as $b_k=|N_k(v)|/\left(|N_{k-1}(v)|\cdot|N(v)|\right)$,
where $N_k(v)$ denotes the set of nodes which are distance $k$ to $v$ in $G_N$.
For example, the branching fraction of a tree with degree $d$ is $1$ for all $k$,
and the branching fraction of a clique is $b_1=1$ and $b_k=0$ for all $k>1.$
One more example is as follows.
In Appendix~\ref{app:experiments}, 
we show that the neighborhood graph of the
NASBench-201 search space is $(K_5)^6$ and therefore its branching factor is
$b_k=\frac{6-k+1}{6k}.$

\begin{restatable}{relem}{geneqns}\label{lem:gen_eqns}
Given $A$, $\ell$, $s$, $\pdfn$, and $\pdfe$,
then for all $v\in A$, we have the following equations.
\begin{align}
&\E[|\ls^{-1}(v)|]=s\int_{\ell(v)}^\infty \pdfe(\ell(v), y)\label{eq:e1}\\
&\quad~\cdot\left(\int_{\ell(v)}^\infty 
\pdfe(y,z)dz\right)^{s-1}dy,~\text{and}\nonumber
\end{align}
\begin{align}
&\E[|\ls^{-k}(v)|]=b_{k-1}\cdot \E[|\ls^{-1}(v)|]\label{eq:ex}\\
&\quad~\cdot\left(\frac{\int_{\ell(v)}^\infty \pdfe(\ell(v),y)\E[|\ls^{-(k-1)}(y)|]dy}
{\int_{\ell(v)}^\infty \pdfe(\ell(v),y)dy}\right).\nonumber
\end{align}
\end{restatable}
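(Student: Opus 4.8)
The plan is to write $g_k(x):=\E\!\left[\,|\ls^{-k}(v)| \mid \ell(v)=x\,\right]$ — this is well defined because $G_N$ is vertex transitive and a single $\pdfe$ governs every edge, so the conditional expectation depends on $v$ only through $\ell(v)$ (matching the abuse of notation $\ls^{-*}(v)=\ls^{-*}(x)$ for $\ell(v)=x$) — and to check that \eqref{eq:e1}–\eqref{eq:ex} are exactly the recursion $g_k$ obeys in the Markov-type model of Section~\ref{sec:method}. Two structural facts do the work. First, $\ls$ strictly decreases $\ell$ along every orbit, so $\ell(u')>\ell(v)$ for all $u'\in\ls^{-k}(v)$, the shells $\ls^{-k}(v)$ are pairwise disjoint, and the first hop out of $u'$ lands on a strictly lower-loss neighbour. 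Second, since $\ls$ is a partial function, $u'\mapsto\ls^{k-1}(u')$ partitions $\ls^{-k}(v)$ by which element of $\ls^{-1}(v)$ it reaches, giving the exact identity
\[
\ls^{-k}(v)=\bigsqcup_{u\in\ls^{-1}(v)}\ls^{-(k-1)}(u),
\]
which under expectations is the engine of the recursion.

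For \eqref{eq:e1} I would apply linearity over the $s$ neighbours of $v$: $g_1(\ell(v))=\sum_{u\in N(v)}P(\ls(u)=v\mid\ell(v))$, with all $s$ summands equal by vertex transitivity. For a fixed $u$, $\ls(u)=v$ iff $\ell(v)<\ell(u)$ and $v=\argmin_{w\in N(u)}\ell(w)$; continuity of $\pdfe$ makes ties null, so the second condition says exactly that the $s-1$ neighbours of $u$ other than $v$ all have loss exceeding $\ell(v)$. Conditioning on $\ell(u)=y$ (density $\pdfe(\ell(v),y)$) forces $y>\ell(v)$, and in the model those $s-1$ neighbours have i.i.d.\ losses from $\pdfe(y,\cdot)$, each exceeding $\ell(v)$ with probability $\int_{\ell(v)}^\infty\pdfe(y,z)\,dz$; integrating over $y$ and multiplying by $s$ gives \eqref{eq:e1}.

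For \eqref{eq:ex} I would take expectations of the displayed identity and condition on the first hop. The expected number of $u\in\ls^{-1}(v)$ is $g_1(\ell(v))$ by \eqref{eq:e1}; the loss of such a $u$ is a draw from $\pdfe(\ell(v),\cdot)$ conditioned to exceed $\ell(v)$, i.e.\ has normalised density $\pdfe(\ell(v),y)/\!\int_{\ell(v)}^\infty\pdfe(\ell(v),t)\,dt$ on $(\ell(v),\infty)$; and conditioned on $\ell(u)=y$ the depth-$(k-1)$ preimage of $u$ has effective expected size $b_{k-1}\,g_{k-1}(y)$, where the branching fraction $b_{k-1}$ rescales the naive recursion of $\ls^{-(k-1)}$ onto the true shells of $G_N$ (recall $\prod_{j\le m}b_j=|N_m(v)|/s^m$, so distributing this global thinning one level at a time inserts $b_{k-1}$ at step $k$). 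Multiplying the three factors yields \eqref{eq:ex}.

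The hard part is the independence I have tacitly assumed in the last step: conditioning on $u\in\ls^{-1}(v)$ carries information about $u$'s neighbourhood — it pushes every other neighbour of $u$ above $\ell(v)$ — which both shifts the conditional law of $\ell(u)$ away from the simple truncation used above and makes the size of $u$'s own preimage depend on that event, and moreover the neighbour of $u$ equal to $v$ is barred from that preimage. The Markov-type model of Section~\ref{sec:method} is exactly what licenses the decoupling: it treats the losses met while moving outward from $v$ as a branching process with parent-dependent offspring law, so the "$v$ is $\argmin$ at $u$" event is charged once, to the $g_1$ factor, and the downstream preimage is a fresh copy of the process started at $\ell(u)$, with the graph-level thinning folded into $b_{k-1}$. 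Pinning down the measurable spaces, checking the Markov property, and justifying the exchange of the sum over the (a.s.\ finite) preimage with the expectation — done with the apparatus of Appendix~\ref{app:method} — is where the real work lies; the skeleton above is the guide.
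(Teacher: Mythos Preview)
Your proposal is correct and follows essentially the same route as the paper's own proof: linearity over the $s$ neighbours plus the ``all other $s-1$ neighbours of $u$ exceed $\ell(v)$'' event for \eqref{eq:e1}, and the decomposition $\ls^{-k}(v)=\bigsqcup_{u\in\ls^{-1}(v)}\ls^{-(k-1)}(u)$ together with the truncated conditional law of $\ell(u)$ and the $b_{k-1}$ correction for \eqref{eq:ex}. If anything you are more explicit than the paper about the Markov-type decoupling that licenses treating the downstream preimage as a fresh process; the paper simply invokes that model (cf.\ the footnote in Section~\ref{sec:method}) and writes down the recursion directly.
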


For some PDFs, it is not possible to find a closed-form solution
for $\E[|\ls^{-k}(v)|]$ because arbitrary 
functions may not have closed-form antiderivatives.
By assuming there exists a function $g$ such that $\pdfe(x,y)=g(y)$ for all $x$,
we can use induction to find a closed-form expression for $\E[|\ls^{-k}(v)|]$.
This includes the uniform distribution ($g(y)=1$ for $y\in [0,1]$),
as well as distributions that are polynomials in $x$.
In Appendix~\ref{app:method}, 
we use this to show that $\E[|\ls^{-*}(v)|]$
can be approximated by $1+s\cdot G(\ell(v))^s\cdot e^{G(\ell(v))^s},$
where $G(x)=\int_x^\infty g(y)dy.$
Now we use a similar technique to give a closed-form expression for Theorem~\ref{thm:prob_opt}
when the local and global distributions are uniform.
We stress that this lemma is simply an application of Lemma~\ref{lem:gen_eqns},
and our main results (Theorem~\ref{thm:prob_opt} and Lemma~\ref{lem:gen_eqns})
hold without any assumptions on the local and global distributions.

\begin{restatable}{relem}{fulluniform}\label{lem:full_uniform}
If $\pdfn(x)=\pdfe(x,y)=U([0,1])~\forall x\in A$,
then $\E[|\{v\mid v=\ls^*(v)\}|]=\frac{n}{s+1}$ and
\begin{align*}
&\E[|\{v\mid \ell(\ls^*(v))-\ell(v^*)\leq\epsilon\}|]\\
&=n \sum_{i=0}^\infty \left(\frac{s^i\left(1-(1-\epsilon)^{(i+1)s+1}\right)}{(i+1)s+1}
\cdot\prod_{j=0}^{i-1}\frac{b_j}{js+1}\right).
\end{align*}
\end{restatable}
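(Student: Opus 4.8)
The plan is to derive both equations by specializing Theorem~\ref{thm:prob_opt} and Lemma~\ref{lem:gen_eqns} to the uniform case $\pdfn(x)=\pdfe(x,y)=1$ on $[0,1]$. There the support of the loss distribution is $[0,1]$, so $\ell(v^*)=0$, and for every $x\in[0,1]$ one has $\int_x^\infty\pdfn(y)\,dy=\int_x^\infty\pdfe(x,y)\,dy=1-x$. The first equation then falls out immediately: the first formula of Theorem~\ref{thm:prob_opt} becomes $n\int_0^1(1-x)^s\,dx=\frac{n}{s+1}$.

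For the second equation I first need a closed form for $\E[|\ls^{-*}(v)|]$, which is where Lemma~\ref{lem:gen_eqns} comes in. Write $G=1-\ell(v)$. Because $\pdfe(x,y)=g(y)$ with $g\equiv 1$ on $[0,1]$, the inner integral $\int_{\ell(v)}^\infty\pdfe(y,z)\,dz$ in Equation~\eqref{eq:e1} equals $G$ independently of $y$, so $\E[|\ls^{-1}(v)|]=s\bigl(\int_{\ell(v)}^\infty\pdfe(\ell(v),y)\,dy\bigr)G^{s-1}=sG^{s}$. I would then prove by induction on $k\ge 1$ that
\[
\E[|\ls^{-k}(v)|]=s^{k}\,G^{ks}\prod_{j=1}^{k-1}\frac{b_j}{js+1},
\]
using Equation~\eqref{eq:ex}: the denominator $\int_{\ell(v)}^\infty\pdfe(\ell(v),y)\,dy$ is $G$, and inserting the inductive hypothesis into the numerator collapses it to the single power-rule integral $\int_{\ell(v)}^{1}(1-y)^{(k-1)s}\,dy=G^{(k-1)s+1}/((k-1)s+1)$; multiplying by the prefactor $b_{k-1}\,\E[|\ls^{-1}(v)|]=b_{k-1}\,sG^{s}$ and dividing by $G$ reproduces exactly the $k$-th expression, attaching one new factor $b_{k-1}/((k-1)s+1)$ and one extra power $G^{s}$. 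Summing over $k\ge 0$, with $|\ls^{-0}(v)|=1$ and the convention $b_0=1$ (so the $j=0$ factor of the product is trivial), gives
\[
\E[|\ls^{-*}(v)|]=\sum_{i=0}^{\infty}s^{i}\,G^{is}\prod_{j=0}^{i-1}\frac{b_j}{js+1},
\]
a series that converges since $b_j\le 1$ and $s^{i}/\prod_{j=1}^{i-1}(js+1)\to 0$.

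Then I plug this into the second formula of Theorem~\ref{thm:prob_opt}, using $\ell(v^*)=0$, $\pdfn(x)=1$, $\int_x^\infty\pdfe(x,y)\,dy=1-x$, and the stated identification $\ls^{-*}(x)=\ls^{-*}(v)$ when $\ell(v)=x$, to obtain
\[
n\int_0^{\epsilon}(1-x)^{s}\sum_{i=0}^{\infty}s^{i}(1-x)^{is}\Bigl(\prod_{j=0}^{i-1}\frac{b_j}{js+1}\Bigr)dx .
\]
Since all summands are nonnegative, Tonelli's theorem justifies interchanging the sum and the integral, after which each term reduces to $\int_0^{\epsilon}(1-x)^{(i+1)s}\,dx=\bigl(1-(1-\epsilon)^{(i+1)s+1}\bigr)/((i+1)s+1)$, and collecting constants yields the claimed sum.

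There is no deep obstacle: the lemma is essentially a bookkeeping exercise on top of the two earlier results. The step needing genuine care is the induction for $\E[|\ls^{-k}(v)|]$ — one has to track simultaneously the branching factor $b_{k-1}$, the denominator $(k-1)s+1$, and the extra $G^{s}$ that Equation~\eqref{eq:ex} contributes, confirm that the base case from Equation~\eqref{eq:e1} already carries the leading factor $s$, and observe that it is exactly the uniform assumption that makes the inner integral $\int_{\ell(v)}^\infty\pdfe(y,z)\,dz$ independent of $y$ so the recursion closes in closed form.
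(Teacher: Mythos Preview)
Your proposal is correct and follows essentially the same route as the paper: specialize Theorem~\ref{thm:prob_opt} to the uniform case for the first claim, establish the closed form $\E[|\ls^{-k}(v)|]=s^k(1-\ell(v))^{ks}\prod_{j=0}^{k-1}\frac{b_j}{js+1}$ by induction on Lemma~\ref{lem:gen_eqns}, sum to get $\E[|\ls^{-*}(v)|]$, and substitute back into the second formula of Theorem~\ref{thm:prob_opt}. The only cosmetic difference is that the paper isolates the induction as a standalone lemma for arbitrary $g$ with $\pdfe(x,y)=g(y)$ (so $G(x)=\int_x^\infty g(y)\,dy$ replaces your $1-\ell(v)$), whereas you carry it out directly for $g\equiv 1$; your explicit invocation of Tonelli to swap sum and integral is a nice touch the paper leaves implicit.
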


\begin{proof}[\textbf{Proof sketch.}]
The probability density function of $U([0,1])$ is equal to 1 on $[0,1]$ and 0 otherwise.
Then $\int_{x}^\infty \pdfe(x,y)dy=\int_{x}^1 dy=(1-x).$
We use this in combination with Theorem~\ref{thm:prob_opt} to prove the first statement:
\begin{equation*}
\E[|\{v\mid v=\ls^*(v)\}|]
=n\int_{\ell(v^*)}^\infty 1\cdot 
\left(1-x\right)^s dx=\frac{n}{s+1}.
\end{equation*}

To prove the second statement, first
we use induction on the 
expression in Lemma~\ref{lem:gen_eqns}
to show that for all $v\in A$, 
\begin{align*}
&\E[|\ls^{-*}(v)|]=\sum_{k=0}^\infty \E[|\ls^{-k}(v)|]\\
&=\sum_{k=0}^\infty \left(s^k (1-\ell(v))^{sk} 
\cdot\prod_{i=0}^{k-1}\frac{b_i}{is+1}\right).
\end{align*}
We plug this into the second part of Theorem~\ref{thm:prob_opt}:
\begin{align*}
&\E[|\{v\mid \ell(\ls^*(v))-\ell(v^*)\leq\epsilon\}|]\\
&=n \int_{\ell(v^*)}^{\ell(v^*)+\epsilon}
1\cdot (1-x)^s \sum_{k=0}^\infty \E[|\ls^{-k}(x)|]dx\\
&=n \int_{\ell(v^*)}^{\ell(v^*)+\epsilon}
(1-x)^s \sum_{k=0}^\infty \left(s^k(1-x)^{sk} \prod_{i=0}^{k-1}\frac{b_j}{is+1}\right)dx\\
&=n \sum_{i=0}^\infty\left(\frac{s^i\left(1-(1-\epsilon)^{(i+1)s+1}\right)}{(i+1)s+1}
\cdot\prod_{j=0}^{i-1}\frac{b_j}{js+1}\right).
\end{align*}
\end{proof}

In the next section, we show that our theoretical results can be used to
predict the performance of local search.


\begin{figure}
\centering %
\includegraphics[width=0.42\textwidth]{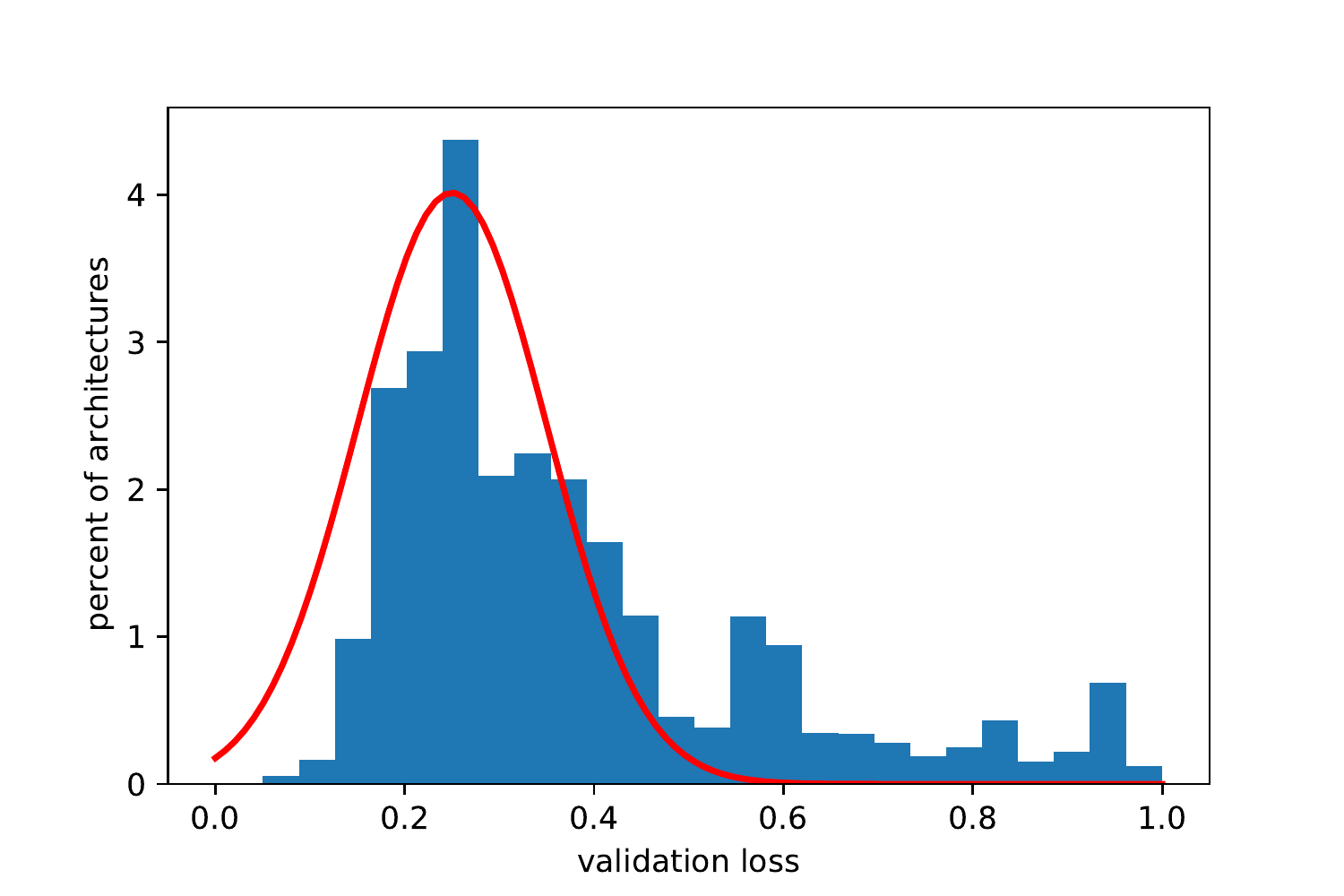}
\includegraphics[width=0.42\textwidth]{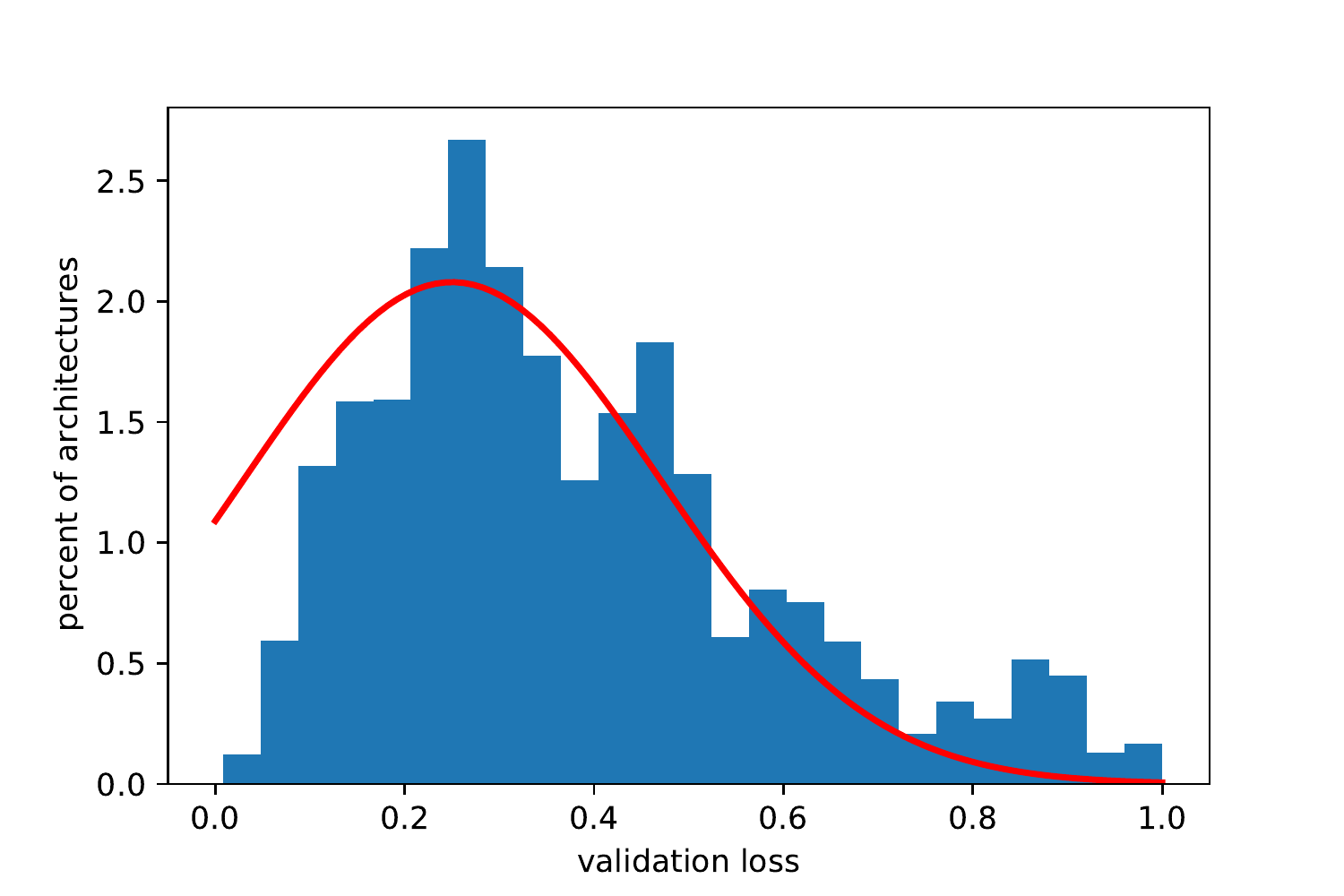}
\caption{
Histogram of validation losses for CIFAR-100 (top) and ImageNet16-120 (bottom) 
in NASBench-201,
fitted with the best values of $\sigma$ and $v$ in Equation~\ref{eq:normal_pdf}.
}
\label{fig:single_histogram}
\end{figure}

\begin{figure*}
\centering %
\includegraphics[width=0.32\textwidth]{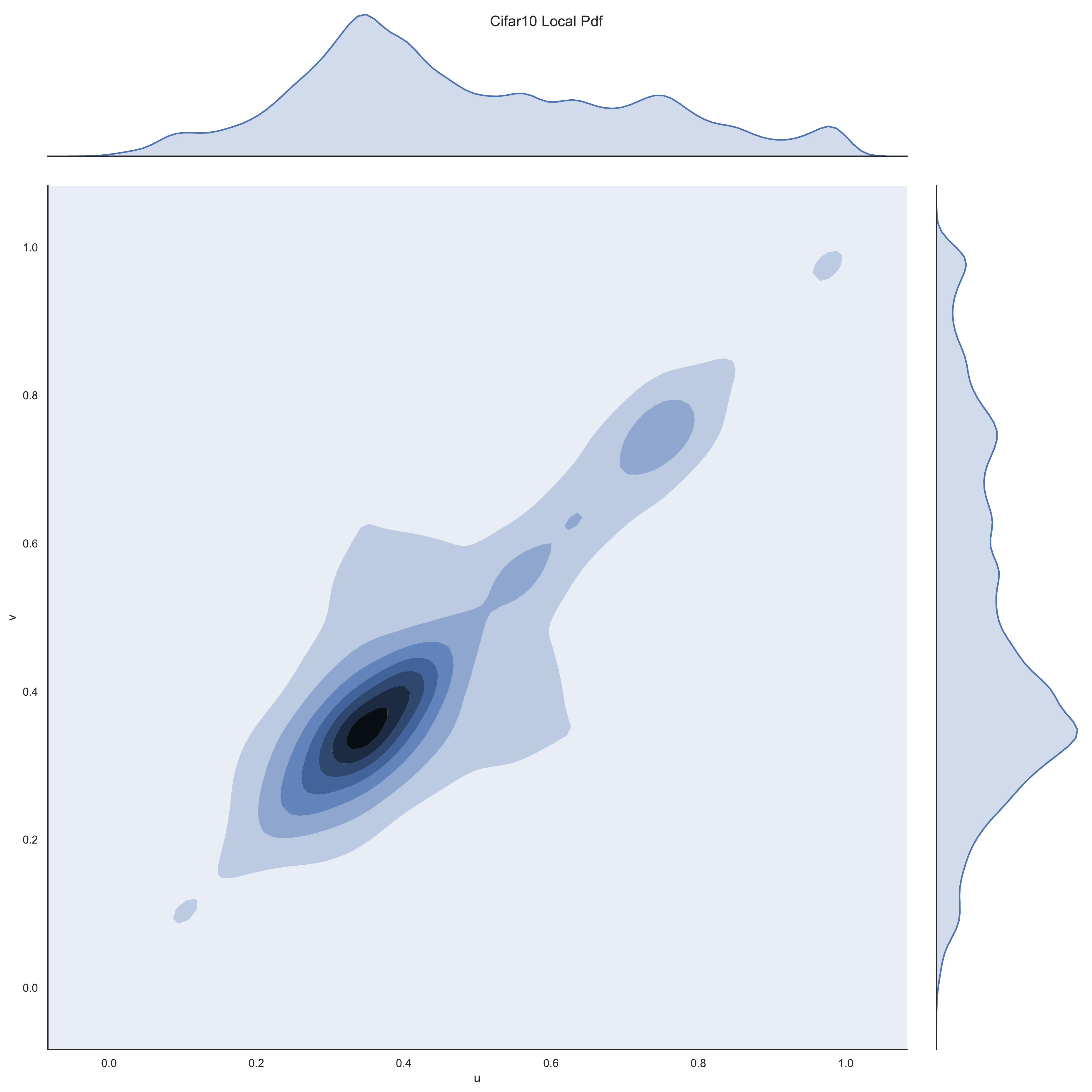}
\includegraphics[width=0.32\textwidth]{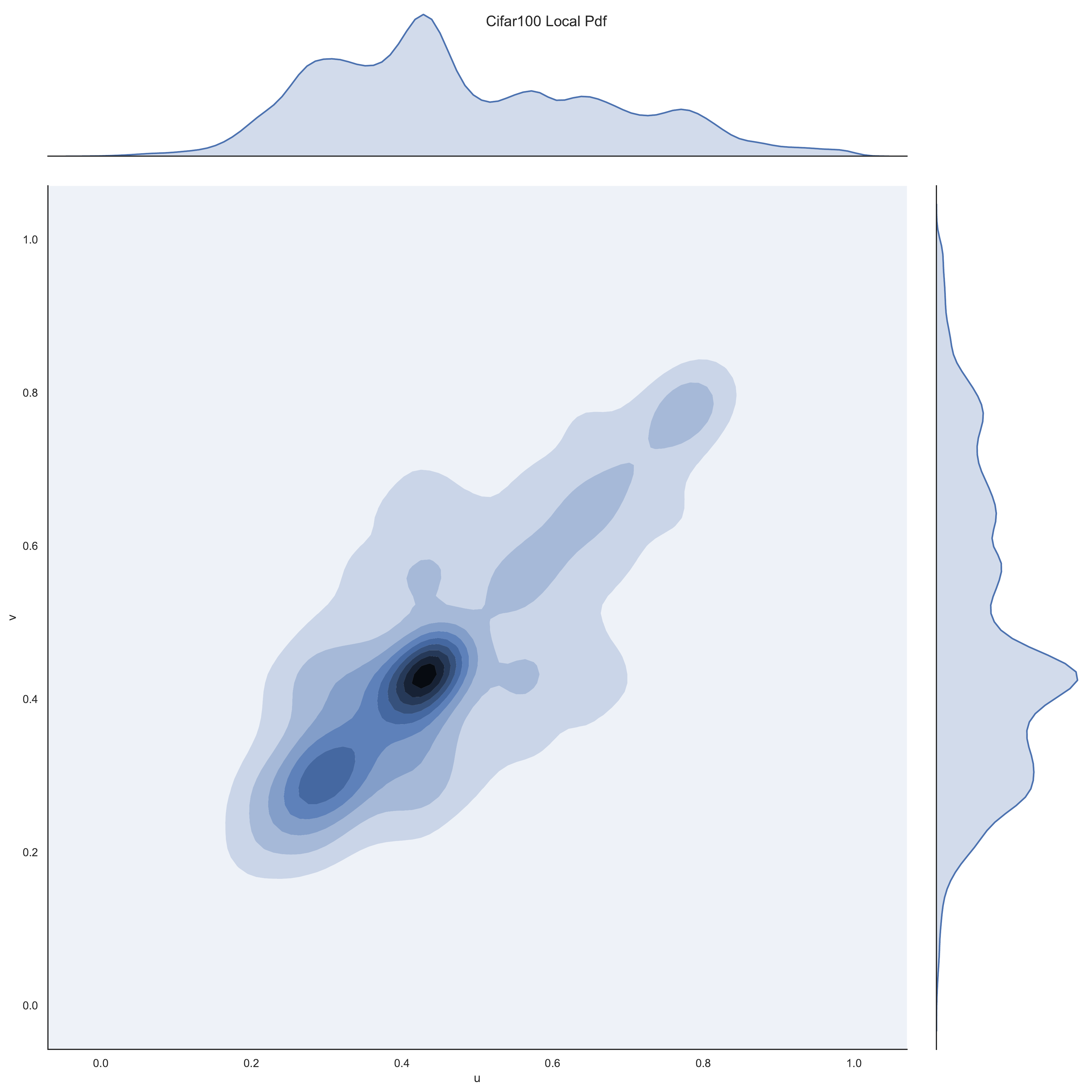}
\includegraphics[width=0.32\textwidth]{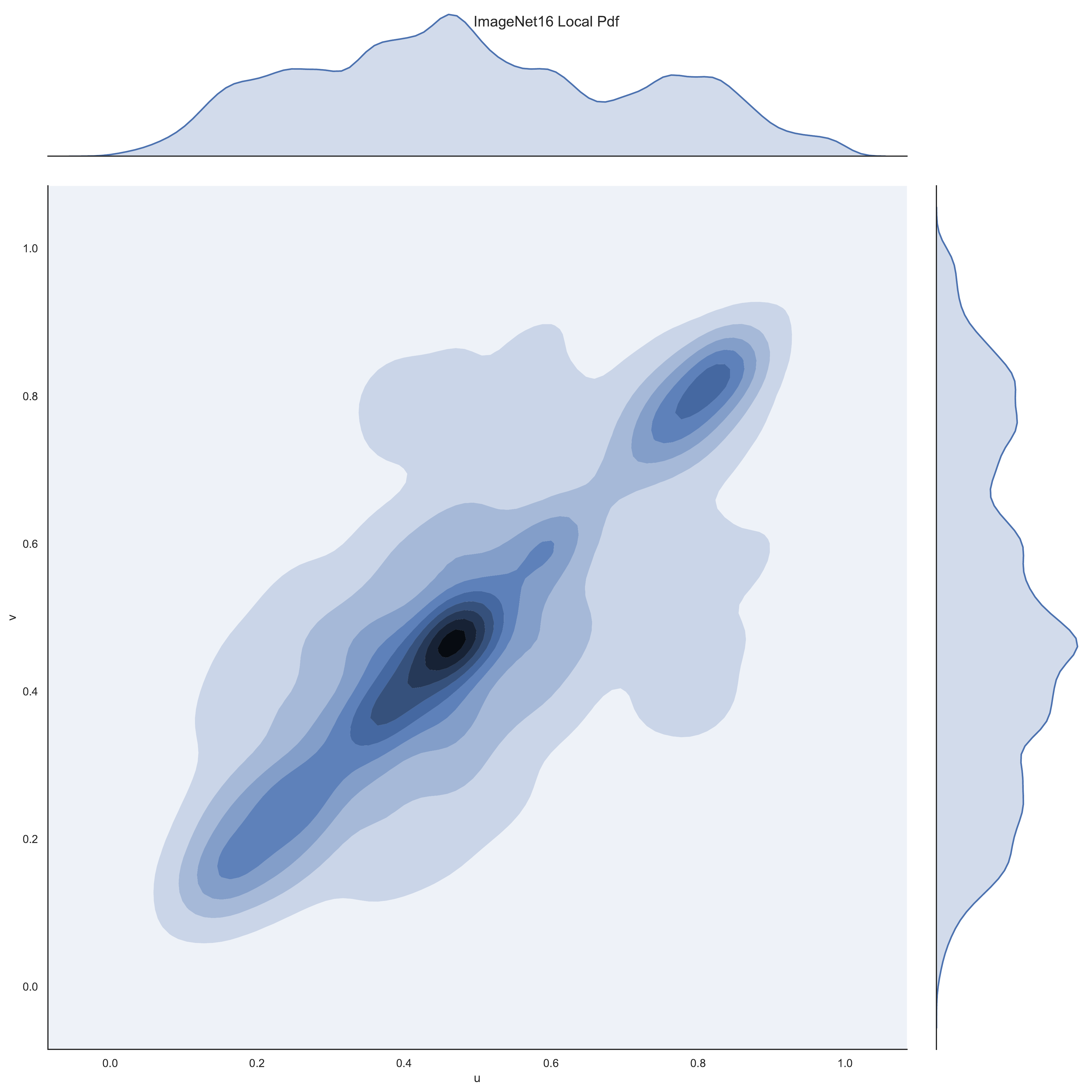}
\caption{
Probability density function for CIFAR-10, CIFAR-100, and
ImageNet16-120 on NASBench-201. For each coordinate $(u,v)$, a darker color indicates
that architectures with accuracy $u$ and $v$ are more likely to be neighbors.
}
\label{fig:201_local_pdfs}
\end{figure*}

\paragraph{Simulation Results.}
We run a local search simulation using the equations in
the previous section
as a means of experimentally validating our theoretical results
with real data (we use NASBench-201).
In order to use these equations, first we must approximate the local
and global probability density functions of the three 
datasets in NASBench-201.
We note that approximating these distributions are not feasible for large search
spaces; the purpose of our theoretical results are meant only to provide a deeper
understanding of local search and lay the groundwork for future studies.
We start by visualizing the probability density functions of the three datasets.
See Figure~\ref{fig:201_local_pdfs}.
We see the most density along the diagonal, meaning that architectures
with similar accuracy are more likely to be neighbors.
Therefore, we can approximate the PDFs by using the following equation:

%

\begin{equation}
\pdf(u)=\frac{
\frac{1}{\sigma\sqrt{2\pi}}\cdot 
e^{-\frac{1}{2}\left(\frac{u-v}{\sigma}\right)^2}
}
{
\int_0^1 \frac{1}{\sigma\sqrt{2\pi}}\cdot 
e^{-\frac{1}{2}\left(\frac{w-v}{\sigma}\right)^2}dw
}
\label{eq:normal_pdf}
\end{equation}

This is a normal distribution with mean $u-v$ 
and standard deviation $\sigma$, truncated so that it is a valid PDF
in $[0,1].$
We note that prior work has also modeled architecture accuracies in NAS with
a normal distribution~\citep{real2019regularized}.
To model the global PDF for each dataset,
we plot a histogram of the validation losses and match them to the closest-fitting
values of $\sigma$ and $v$.
See Figure~\ref{fig:single_histogram}.
The best values of $\sigma$ are $0.18,~0.1,$ and $0.22$ for CIFAR-10, CIFAR-100, 
and ImageNet16-120, respectively, and the best values for $v$ are all $0.25$.
To model the local PDF for each dataset, we compute
the random walk autocorrelation (RWA) on each dataset.
RWA is defined as the autocorrelation of the accuracies of points visited during a
random walk on the neighborhood 
graph~\citep{weinberger1990correlated, stadler1996landscapes},
and was used to measure locality in NASBench-101 in prior work~\citep{nasbench}.
For the full details of the steps taken to model the datasets in NASBench-201, 
see Appendix~\ref{app:experiments}.

Now 
we use Theorem~\ref{thm:prob_opt} to compute the 
probability that a randomly drawn architecture will converge to within $\epsilon$
of the global minimum when running local search.
Since there is no closed-form solution for the expression in Lemma~\ref{lem:gen_eqns},
we compute Theorem~\ref{thm:prob_opt} up to the 5th preimage.
We compare this to the experimental results on NASBench-201.
We also compare the performance of the NASBench-201 search space with validation losses drawn uniformly at random, 
to the performance predicted by Lemma~\ref{lem:full_uniform}.
Finally, we compare the preimage sizes of the architectures in NASBench-201 with
randomly drawn validation losses to the sizes predicted in Lemma~\ref{lem:gen_eqns}.
See Figure~\ref{fig:ls_baselines_201}.
Our theory exactly predicts the performance and the preimage sizes of the uniform 
random NASBench-201 dataset.
On the three image datasets, our theory predicts the performance
fairly accurately, but is not perfect due to our assumption that the distribution
of accuracies is unimodal.

\section{Conclusion} \label{sec:conclusion}

We show that the difficulty of NAS scales dramatically with the
level of noise in the architecture evaluation pipeline, on popular NAS
benchmarks (NASBench-101, 201, and 301). 
In particular, the simplest local search algorithm is sufficient to outperform
popular state-of-the-art NAS algorithms when the noise in the evaluation pipeline
is reduced to a minimum. We further show that as the noise increases, the number
of local minima increases, and the basin of attraction to the global minimum
shrinks.
This suggests that when the noise in popular NAS benchmarks is reduced to a minimum, 
the number of local minima decreases, making
the loss landscape easy to traverse.
Since local search is a simple technique that often gives competitive 
performance, we encourage local search to be used as a benchmark for NAS in the future.
We also suggest denoising the training pipeline whenever possible in future NAS
applications.

Motivated by our findings, we give a theoretical study which explains the 
performance of local search for NAS on different search spaces.
We define a probabilistic graph optimization framework to study NAS problems, and
we give a characterization of the performance of local search for NAS in our framework. 
Our results improve the theoretical understanding of local search and
lay the groundwork for future studies.
We validate this theory with experimental results.
Investigating more sophisticated variants of local search for NAS such as 
Tabu search, simulated annealing,
or multi-fidelity local search, are interesting next steps.



\section*{Acknowledgments}
This work done while all authors were employed at Abacus.AI.
We thank Willie Neiswanger for his help with this project.


\bibliography{white_242}

\clearpage
\newpage

\appendix


\section{Details from Section~4} \label{app:experiments}

In this section, we give details and supplementary results for Section~\ref{sec:experiments}.

For every benchmark NAS algorithm, we used the code directly from its corresponding
open-source repository.
For regularized evolution, we changed the population
size from 50 to 30 to account for fewer queries.
We did not change any hyperparameters for the other baseline algorithms.
For vanilla Bayesian optimization, we used the ProBO 
implementation~\citep{neiswanger2019probo}.
Our experimental setup is the same as prior work (e.g.,~\citep{nasbench}).
At each timestep $t$, we report the test error of the architecture with the best
validation error found so far, and we run 200 trials of each algorithm and
average the result.

Now we give the local search experimental results for all three datasets
of NASBench-201. This is a similar experimental setup to the plots in
Figure~\ref{fig:ls_baselines_201} (left and middle), but for NASBench-201.
See Figure~\ref{fig:201_real}. 
We tested the simplest local search algorithm (hill-climbing),
as well as the $\cont$ variant described in Section~\ref{sec:prelim},
which we denote by Local++.
Note that for the case of ImageNet16-120,
the initial level of noise is so high that all NAS algorithms 
actually perform worse in the reduced noise version of the problem.

\begin{figure*}
\centering %
\includegraphics[width=0.33\textwidth]{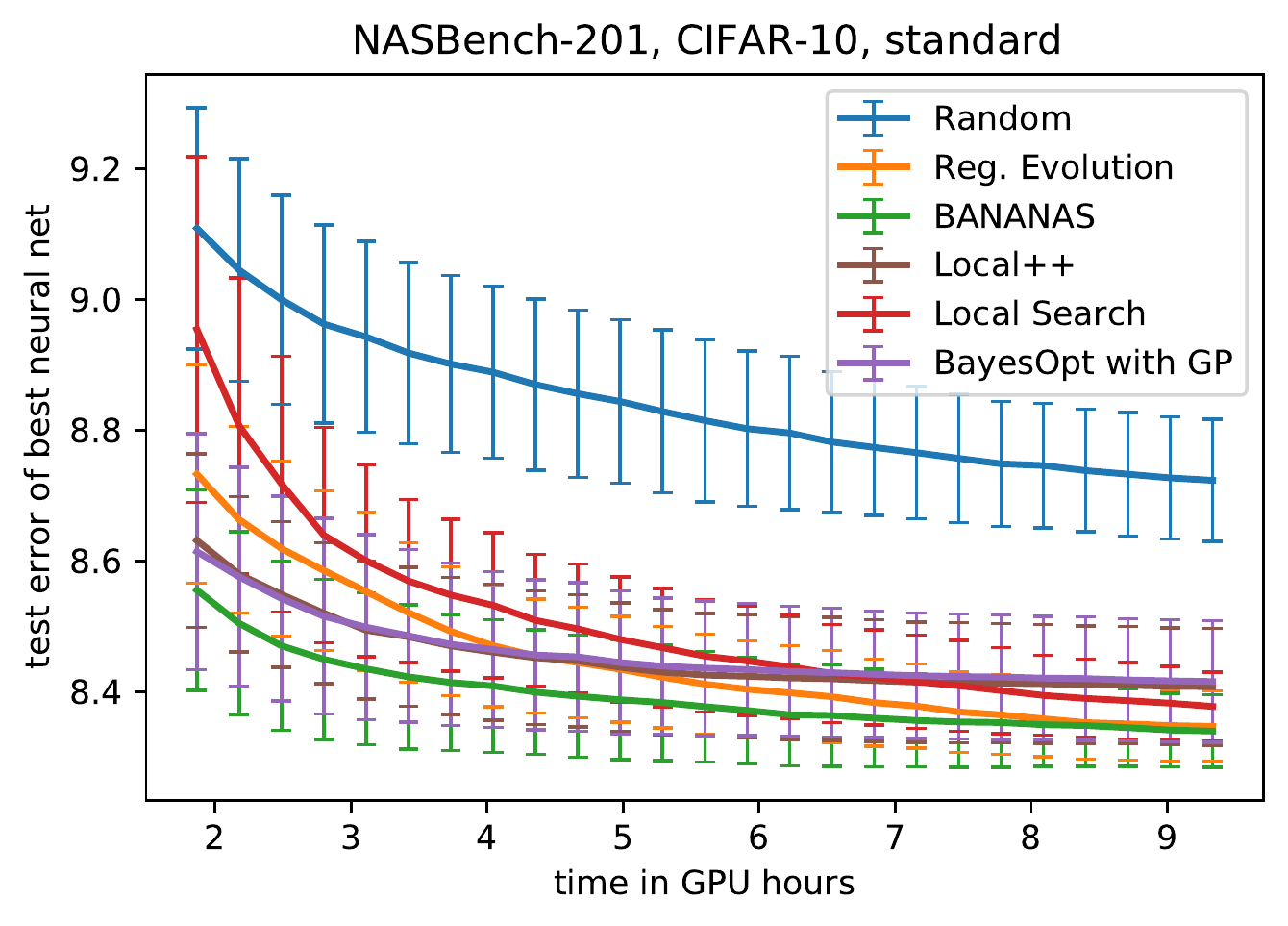}
\hspace{-3pt}
\includegraphics[width=0.33\textwidth]{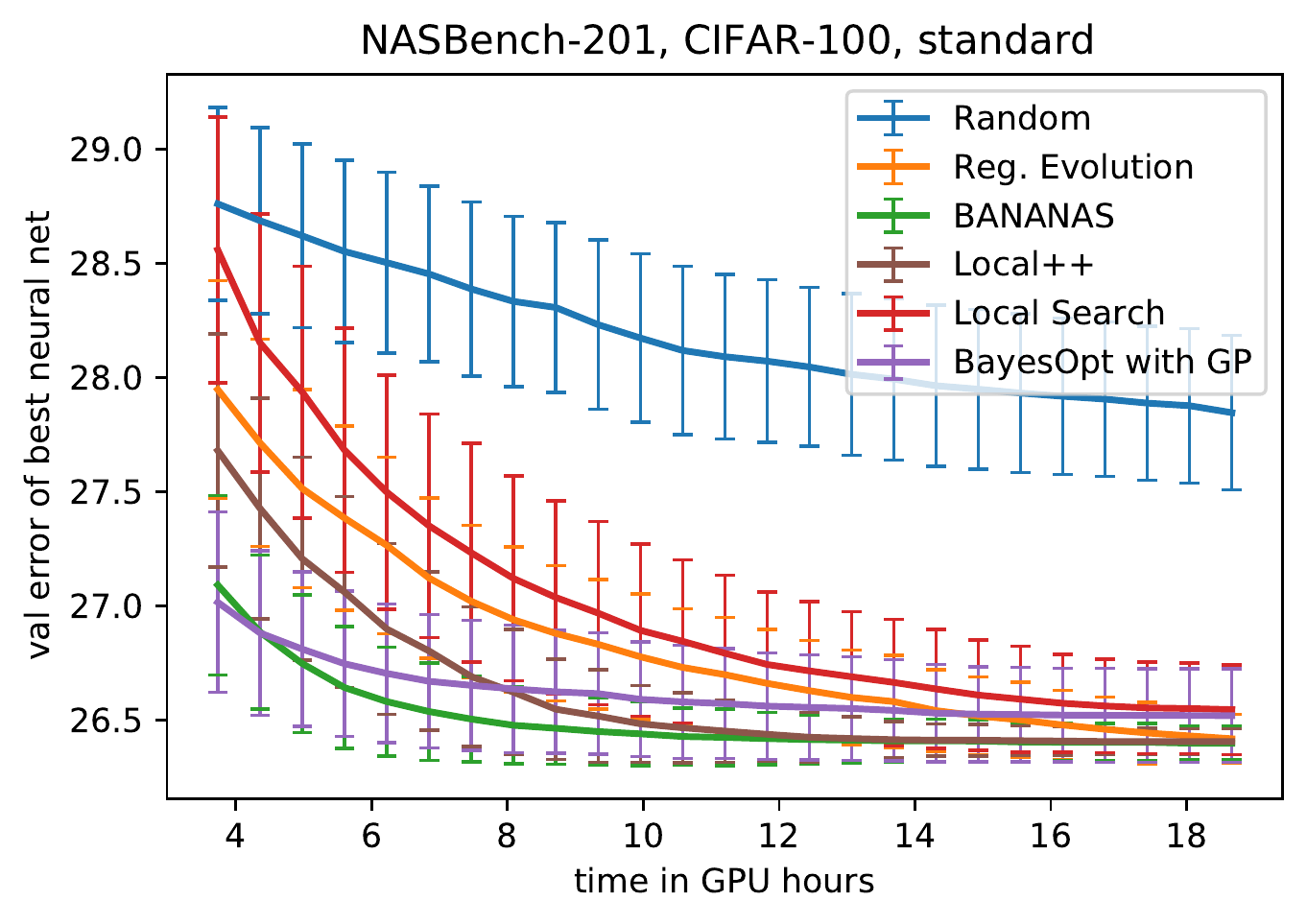}
\hspace{-3pt}
\includegraphics[width=0.33\textwidth]{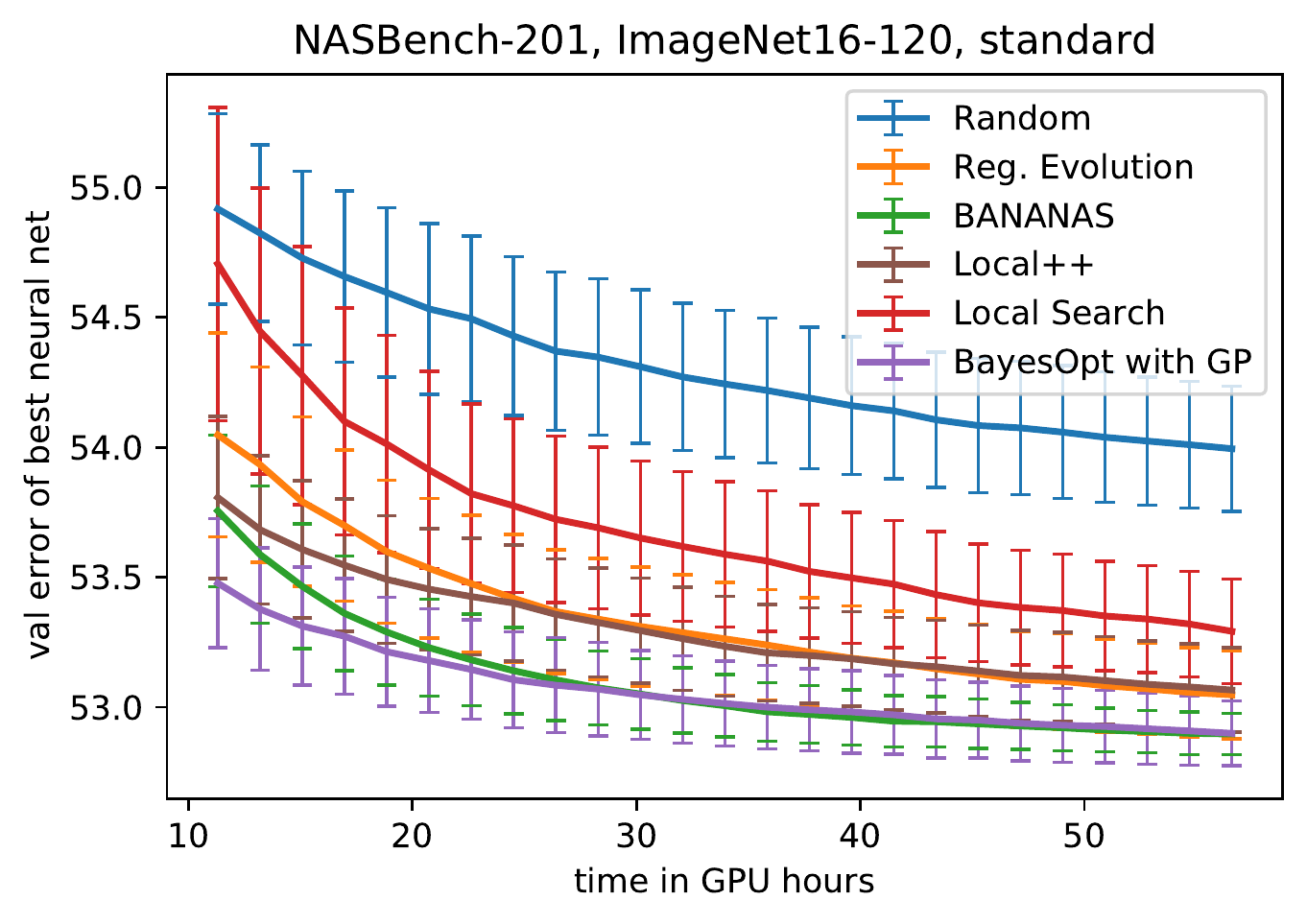}
\includegraphics[width=0.33\textwidth]{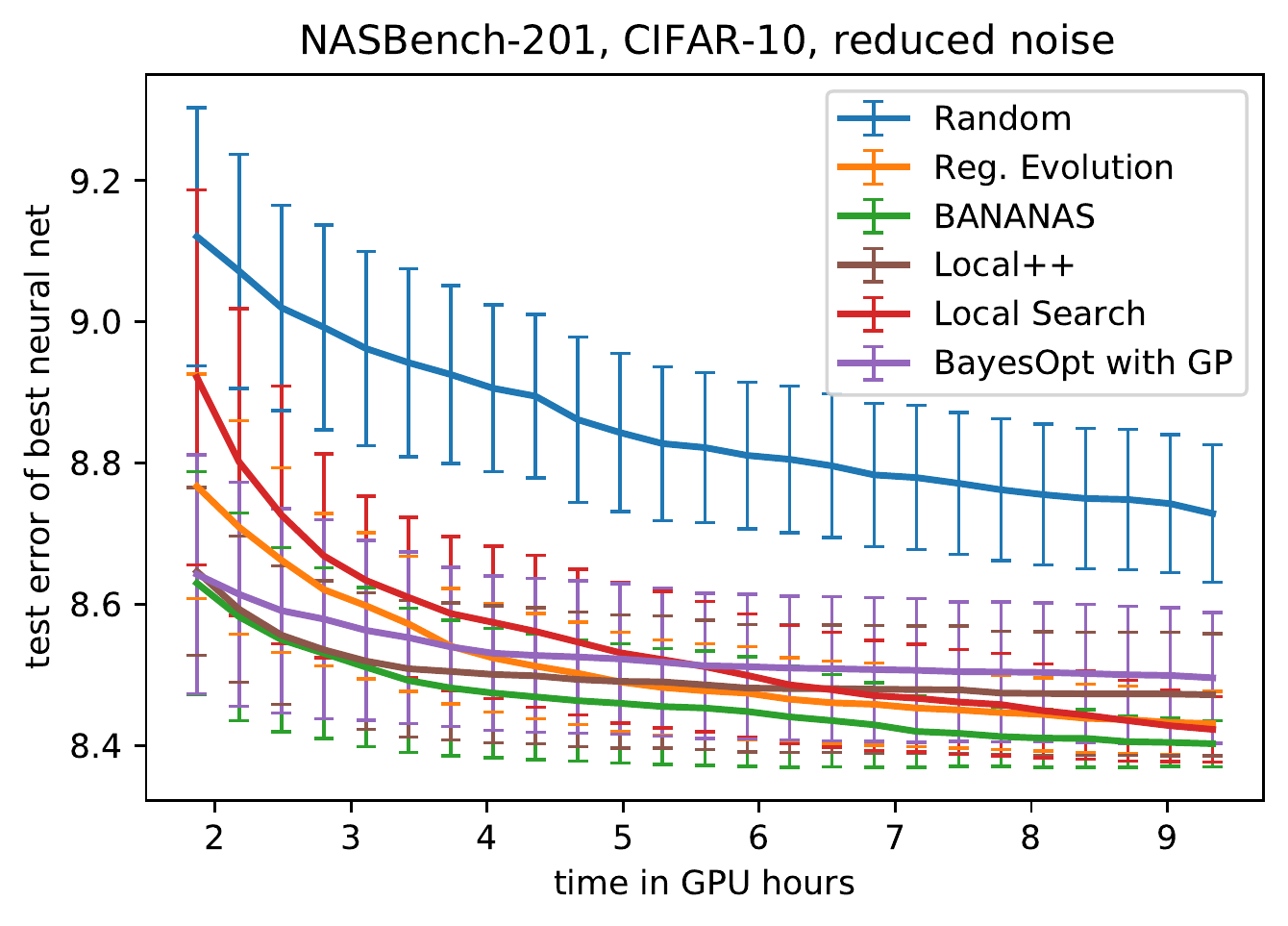}
\hspace{-3pt}
\includegraphics[width=0.33\textwidth]{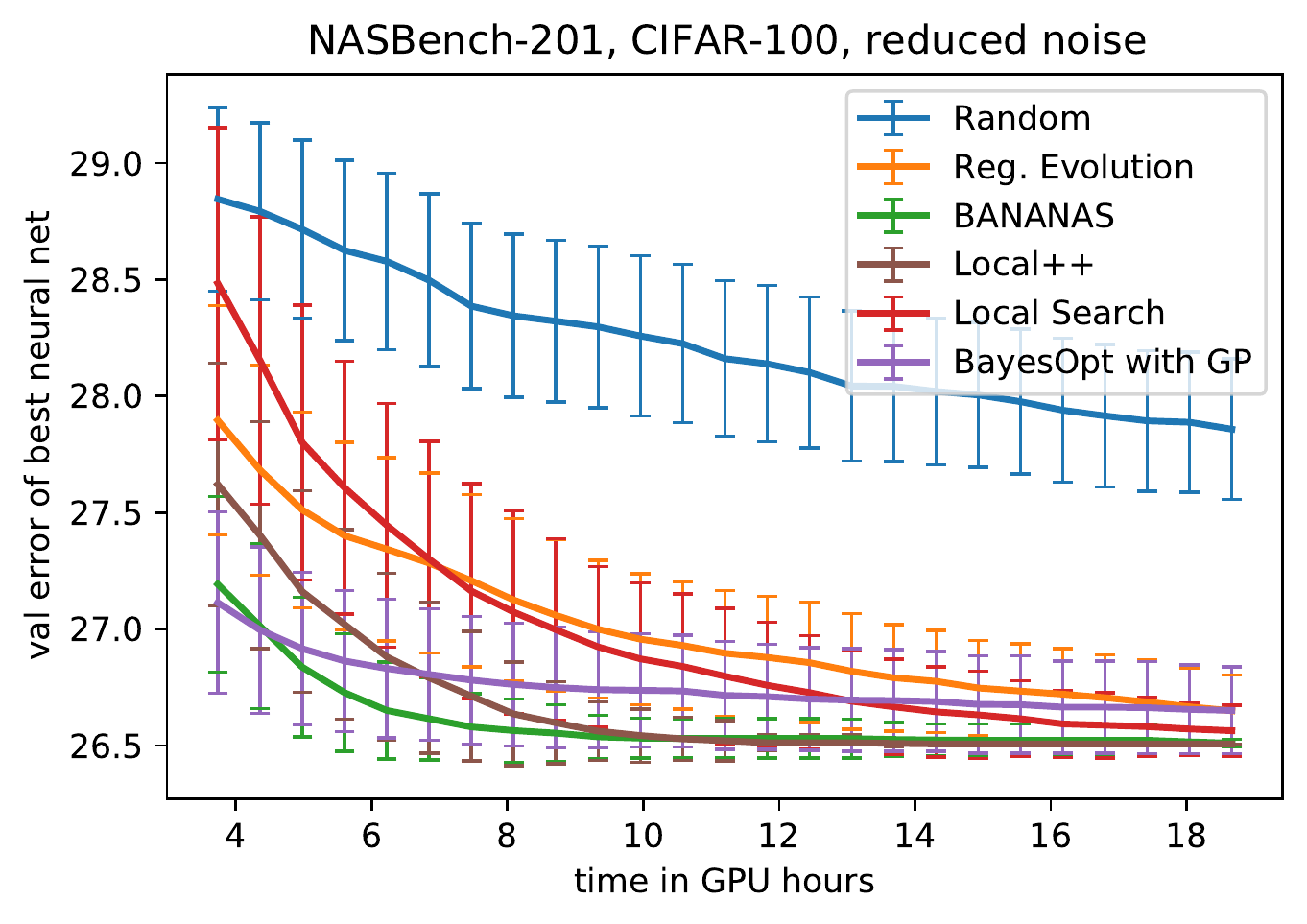}
\hspace{-3pt}
\includegraphics[width=0.33\textwidth]{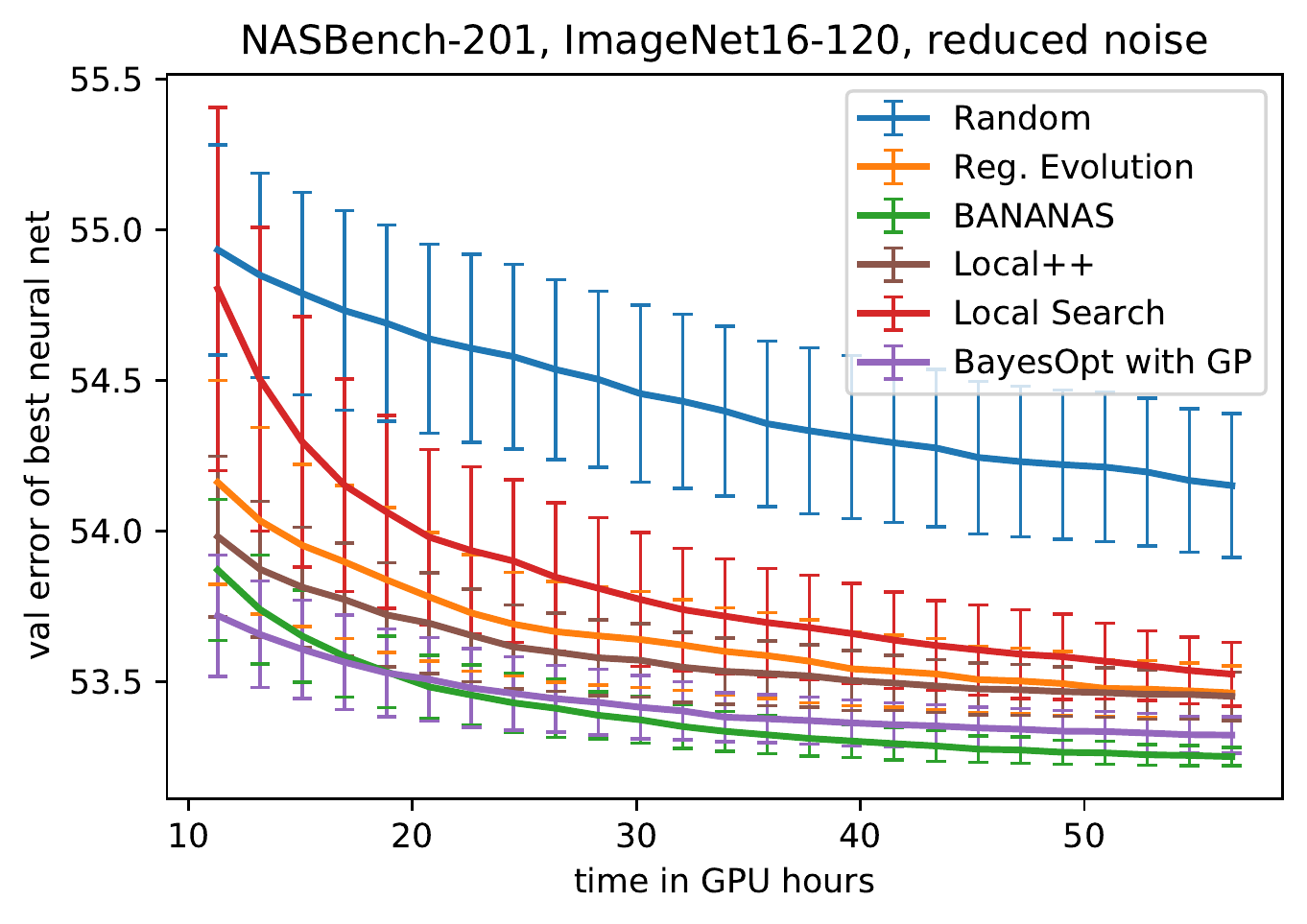}
\caption{
Performance of NAS algorithms on 
denoised (top) and standard (bottom) versions of NASBench-201 
CIFAR-10 (left), CIFAR-100 (middle), and ImageNet16-120 (right).
}
\label{fig:201_real}
\end{figure*}

Now we evaluate the performance of local search as a function of the number
of initial random architectures drawn at the beginning.
We run local search with the number of initial random architectures set to 1, 
and 10 to 100 in increments of 10. For each number of initial random architectures,
we ran 2000 trials and averaged the results.
See Figure~\ref{fig:ls_init_201}.

\begin{figure*}
\centering %
\includegraphics[width=0.98\textwidth]{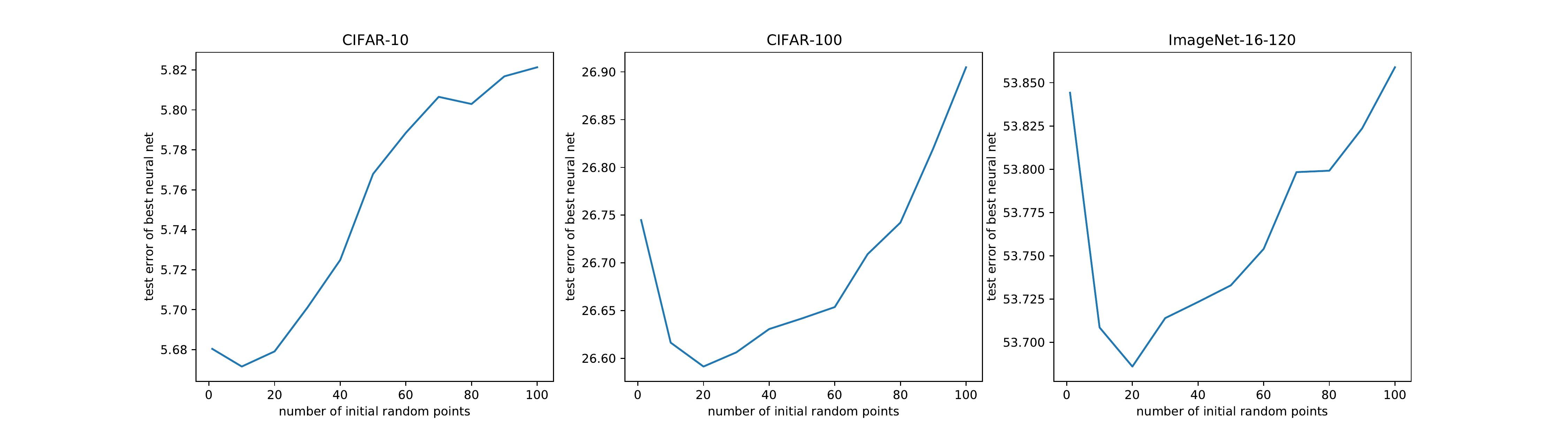}
\caption{
Results for local search performance vs.\ number of inital randomly drawn architectures
on NASBench-201 for CIFAR-10 (left), CIFAR-100 (middle),
and ImageNet-16-120 (right).
}
\label{fig:ls_init_201}
\end{figure*}

\subsection{Details from simulation experiments}

In this section, we give more details for our simulation experiment
described in Section~\ref{sec:experiments}.

For convenience, we restate Equation~\ref{eq:normal_pdf}, the function used to approximate
the datasets in NASBench-201.
\begin{equation*}
\pdf(u)=\frac{
\frac{1}{\sigma\sqrt{2\pi}}\cdot 
e^{-\frac{1}{2}\left(\frac{u-v}{\sigma}\right)^2}
}
{
\int_0^1 \frac{1}{\sigma\sqrt{2\pi}}\cdot 
e^{-\frac{1}{2}\left(\frac{w-v}{\sigma}\right)^2}dw
}
\end{equation*}
This is a normal distribution with mean $u-v$ 
and standard deviation of $\sigma$, truncated so that it is a valid PDF
in $[0,1].$
For a visualization, see Figure~\ref{fig:norm_pdf}.
In order to choose an appropriate probability density function for modelling the
datasets in NASBench-201, we approximate the $\sigma$ values for
both the local and global PDFs.

To model the global PDF for each dataset,
we plot a histogram of the validation losses and match them to the closest-fitting
values of $\sigma$ and $v$.
See Figure~\ref{fig:single_histogram}.
The best values are $\sigma=0.18,~0.1,$ and $0.22$ for CIFAR-10, CIFAR-100, 
and ImageNet16-120, respectively.

\begin{figure*}
\centering %
\includegraphics[width=0.98\textwidth]{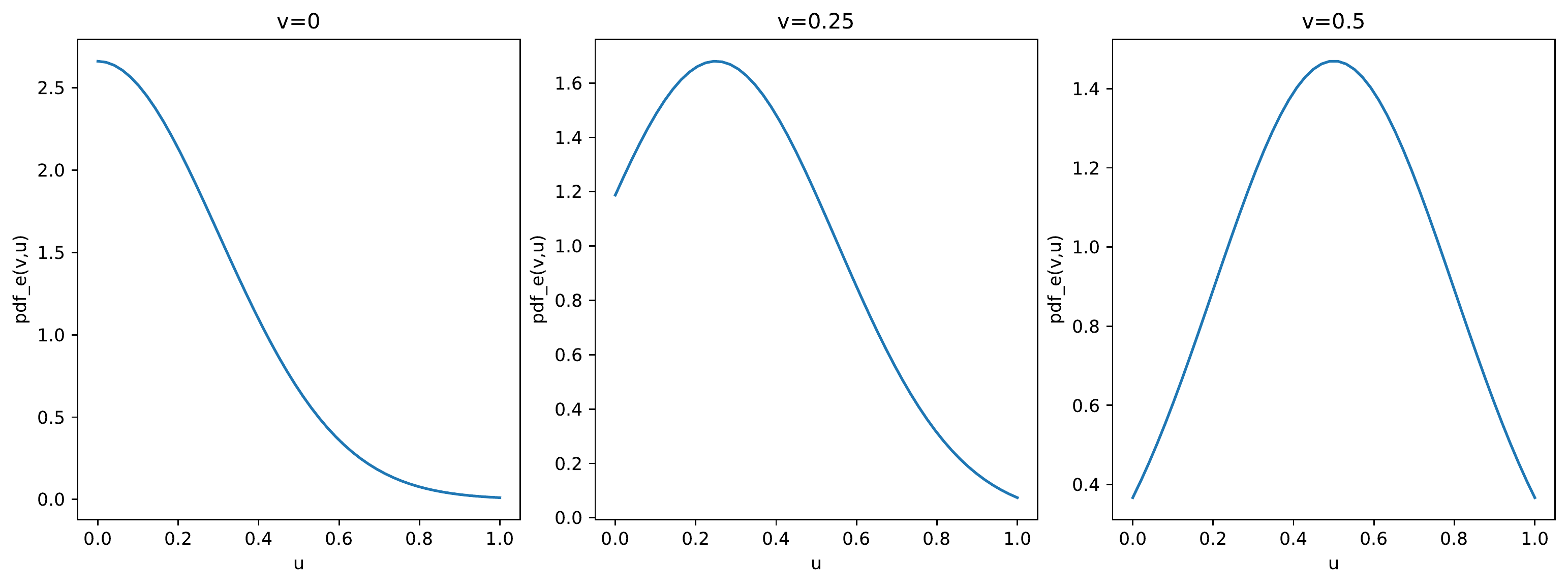}
\caption{
Normal PDF from Equation~\ref{eq:normal_pdf} plotted with three values of $v$.
}
\label{fig:norm_pdf}
\end{figure*}

\begin{figure*}
\centering %
\includegraphics[width=0.4\textwidth]{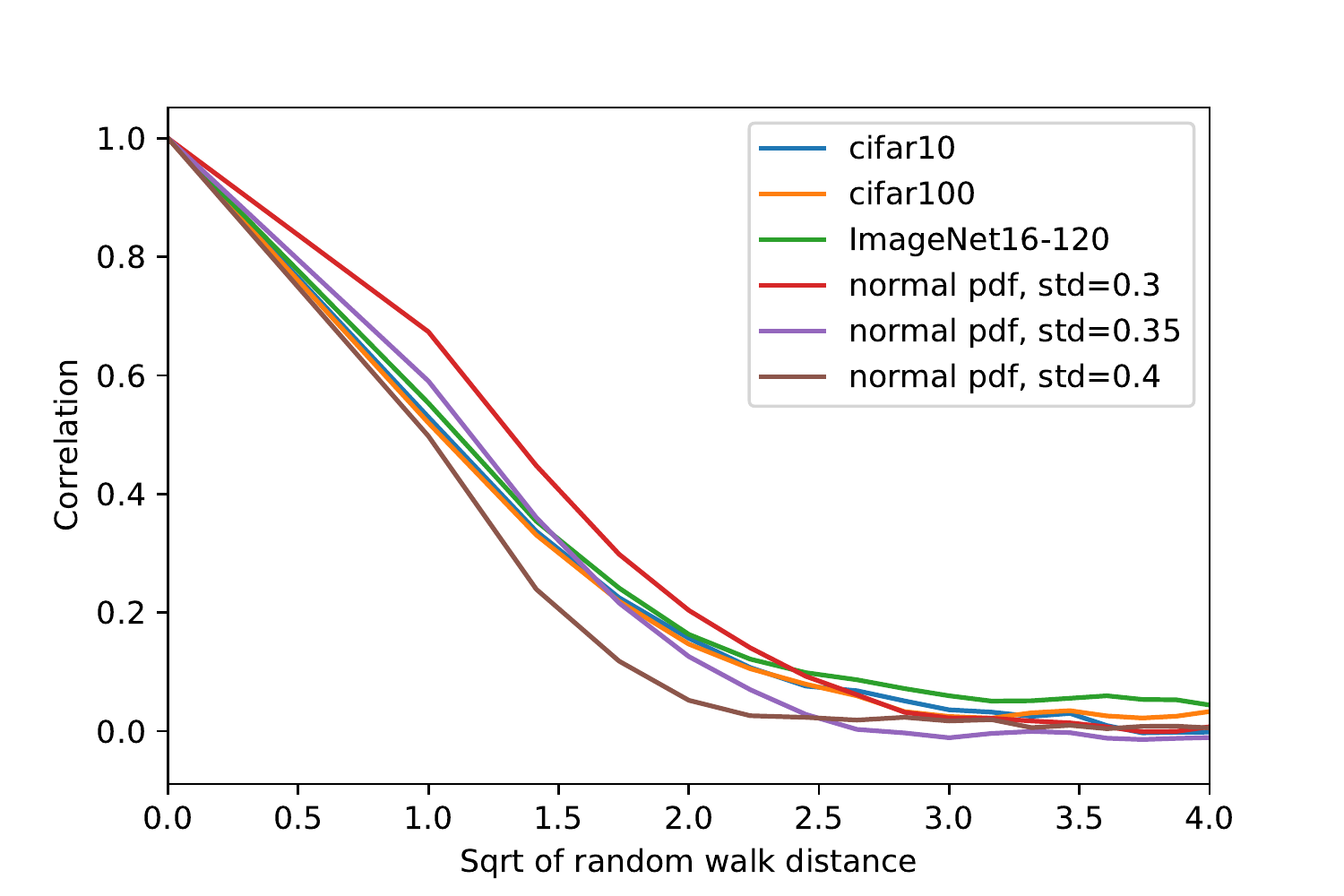}
\caption{
RWA vs.\ distance for three datasets in NASBench-201, as well as three values of 
$\sigma$ in Equation~\ref{eq:normal_pdf}.
Since a random walk reaches a mean distance of $\sqrt{N}$ after $N$ steps,
we plot the $x$-axis as the square root as the autocorrelation
shift, similar to prior work~\cite{nasbench}. 
}
\label{fig:rwa}
\end{figure*}

Now we plot the random-walk autocorrelation (RWA) described in Section~\ref{sec:experiments}.
Recall that RWA is defined as the 
autocorrelation of the accuracies of points visited during a
walk of random single changes through the 
search space~\citep{weinberger1990correlated, stadler1996landscapes},
and was used to measure locality in NASBench-101 in prior work~\citep{nasbench}.
We compute the RWA for all three datasets in NASBench-201, by performing a random
walk of length 100,000.
See Figure~\ref{fig:rwa}.
We see that all three datasets in NASBench-201, as evidenced because there is a
high correlation at distances close to 0.
As the diameter of NASBench-201 is 6, the correlation approaches zero at distances 
beyond about 3.5.
In order to model the local pdfs of each dataset,
we also compute the RWA for Equation~\ref{eq:normal_pdf}, and match each dataset
with the closest value of $\sigma$.
We see that a value of $\sigma=0.35$ is the closest match for all three datasets.

Now for each of the three NASBench-201 datasets, we have estimates for 
the $\pdfe$ and $\pdfn$
distributions. We plug each ($\pdfe$, $\pdfn$) pair into Theorem~\ref{thm:prob_opt},
which gives a plot of $\epsilon$ vs.\ percent of architectures that converge to within
$\epsilon$ of the global optimum after running local search.
We compare these to the true plot in Figure~\ref{fig:ls_baselines_201}.
For the random simulation, we are modeling the case where $\pdfe=\pdfn=U([0,1])$,
so we can use Lemma~\ref{lem:full_uniform} directly.

\subsection{Best practices for NAS research}
The area of NAS research has had issues with reproducibility and fairness in empirical
comparisons~\citep{randomnas, nasbench}, and there is now a checklist for best 
practices~\citep{lindauer2019best}. In order to promote best practices, we discuss each
point on the list, and encourage all NAS research papers to do the same.

\begin{itemize}
    \item \textbf{Releasing code.}
    Our code is publicly available at\\ \url{https://github.com/naszilla/naszilla}.
    The training pipelines and search spaces are from popular existing NAS work:
    NASBench-101, NASBench-201, and NASBench-301.
    \item \textbf{Comparing NAS methods.}
    We made fair comparisons due to our use of NASBench-101, 201, and 301.
    For baseline comparisons, we used open-source code, a few times adjusting hyperparameters
    to be more appropriate for the search space.
    We ran ablation studies, compared to random search, and compared performance over time.
    We performed 200 trials on tabular benchmarks.
    \item \textbf{Reporting important details.}
    Local search only has two boolean hyperparameters, so we did not need to tune
    hyperparameters. We reported the times for the full NAS method and all details for
    our experimental setup.
\end{itemize}

\section{Details from Section~5} \label{app:method}

In this section, we give details from Section~\ref{sec:method}.
For convenience, we restate all theorems and lemmas here.

We start by formally defining all measurable spaces in our theoretical framework.
Recall that the topology of the search space is fixed and discrete, 
while the distribution of validation losses for architectures is randomized and continuous.
This is because training a neural network is not deterministic; in fact, both NASBench-101 and
NASBench-201 include validation and test accuracies for three different random seeds for each
architecture, to better simulate real NAS experiments.
Therefore, we assume that the validation loss for a trained architecture is sampled from
a global probability distribution, and for each architecture, 
the validation losses of its neighbors are sampled from a local probability distribution.

Let $(\R, \B(\R))$ denote a measurable space for the global validation losses induced by the dataset on the architectures,
where $\B(\R)$ is the Borel $\sigma$-algebra on $\R$. 
The distribution for the validation loss of any architecture in the search space is given by $\pdfn(x) \forall x \in \R$. 

Let $(\R^2, \B(\R^2))$ denote a measurable space for validation losses in a neighborhood of an architecture. Let $E: \R^2 \rightarrow \R$ denote a random variable mapping the validation losses of two neighboring architectures to the loss of the second architecture, $E(x,y) \mapsto y$. $E$ has a distribution that is characterized by probability density function $\pdfe(x, y) \forall x, y \in \R$. This gives us a probability over the validation loss for a neighboring architecture.

Every architecture $v \in A$ has a loss $\ell(v) \in \R$ that is sampled from $\pdfn$. For any two neighbors $(v, u) \in E_N$, the PDF for the validation loss $x$ of architecture $u$ is given by $\pdfe(\ell(v), x)$.
Note that choices for the distribution $\pdfe$ are constrained by the 
fixed topology of the search space, as well as the selected distribution $\pdfn$.
Let $(A, 2^A)$ denote a measurable space over the nodes of the graph.

For the rest of this section, we fix an arbitrary neighborhood graph
$G_N$ with vertex set $A$ such that for all $v\in A$, $|N(v)|=s$, i.e.,
$G_N$ has regular degree $s$, and we assume that $G_N$ is vertex transitive.
Each vertex in $A$ is assigned a validation loss according to
$\pdfn$ and $\pdfe$ defined above.
The expectations in the following theorem and lemmas are over the random
draws from $\pdfn$ and $\pdfe$.

\probopt*

\begin{proof}[\textbf{Proof.}]
To prove the first statement,
we introduce an indicator random variable on the architecture space to test if the architecture is a local minimum $I: A \rightarrow \R$,
where 

\begin{align*}
I(v) &= \ind\{\ls^*(v) = v\} \\
&= \ind\{\ell(v) < \ell(u)~\forall u\text{ s.t. }(u,v)\in E_N\}.
\end{align*}

The expected number of local minima in $|A|$ is equal to $|A|$ times the
fraction of nodes in $A$ which are local minima. Therefore, we have

\begin{align*}
&\E[|\{v \in A\mid \ls^*(v) = v\}|]\\
&= n \cdot\P(\{I = 1\}) \\
&= n\int_{-\infty}^{\infty} \pdfn(x)\\
&\quad\cdot \P(\{x<\ell(u) \forall u\text{ s.t. }(u,v)\in E_N, x=\ell(v)\}) dx \\
&= n\int_{-\infty}^{\infty} \pdfn(x) \left(\int_{x}^{\infty} \pdfe(x,y) dy\right)^s dx
\end{align*}

In line one we use the notation $\P(\{I = 1\}) \equiv \P(\{v \in A \mid I(v) = 1\})$.

To prove the second statement, we introduce an indicator random variable on the architecture space that tests if a node will terminate on a local minimum that is
within $\epsilon$ of the global minimum, $I_\epsilon: A \rightarrow \R$, where

\begin{align*}
    I_\epsilon(v) &= \ind\{\ls^*(v) = u \land l(u) - l(v^*) \leq \epsilon\} \\
    &= \ind\{\exists S \in \{\ls^{-*}(u): \ls^*(u) = u \\
    &\quad\quad\land l(u) - l(v^*) \leq \epsilon\}, v \in S\}
\end{align*}

We use this random variable to prove the second statement of the theorem.

\begin{align*}
&\E[|\{v \in A \mid \ell(\ls^*(v))-\ell(v^*)\leq\epsilon\}|]\\
&= n \cdot \P(\{I_\epsilon = 1\})\\
&= n \int_{\ell(v^*)}^{\ell(v^*)+\epsilon}
\P(\{v\in A \mid I(v) = 1, \ell(v) = x\})\\
&\quad\cdot\E[|\ls^{-*}(x)|]dx\\
&= n\int_{\ell(v^*)}^{\ell(v^*)+\epsilon}\pdfn(x)
\left(\int_{\ell(v)}^\infty \pdfe(x, y) dy\right)^s\\
&\quad\cdot\E[|\ls^{-*}(x)|]dx\\
\end{align*}

where the last equality follows from the first half of this theorem.
This concludes the proof.
\end{proof}

Recall that we defined the \emph{branching fraction} of graph $G_N$ as $b_k=|N_k(v)|/\left(|N_{k-1}(v)|\cdot|N(v)|\right)$,
where $N_k(v)$ denotes the set of nodes which are distance $k$ to $v$ in $G_N$.
For example, the branching fraction of a tree with degree $d$ is $1$ for all $k$,
and the branching fraction of a clique is $b_1=1$ and $b_k=0$ for all $k>1.$
Also, for any graph, $b_1=1$.
The neighborhood graph of 
the NASBench-201 search space is $(K_5)^6$ and therefore its branching factor 
is $b_k=\frac{6-k+1}{6k}.$

Now we use Theorem~\ref{thm:prob_opt}
along with Chebyshev's Inequality~\citep{chebyshev1867valeurs}
to show that, in the case when the validation error of each architecture 
has Gaussian noise $\epsilon\sim\mathcal{N}(0,\sigma^2)$, then we can bound the 
expected number of local minima by $O\left(\sigma^{2s}\right).$

\begin{corollary}
Given $|A|=n$, $\ell,s,\text{pdf}_n,$ and $\text{pdf}_e$,
assume that for each architecture $x$, the accuracy is
$\ell(x)+\epsilon$, where $\epsilon\sim\mathcal{N}(0,\sigma^2)$.
Then $\E(\text{\# local minima})\in O\left(\sigma^{2s}\right).$
\end{corollary}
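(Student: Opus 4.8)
The plan is to feed the Gaussian-noise model into the formula for the expected number of local minima supplied by the first part of Theorem~\ref{thm:prob_opt}:
\[
\E[|\{v \in A \mid \ls^*(v) = v\}|] \;=\; n\int_{\ell(v^*)}^\infty \pdfn(x)\left(\int_{x}^{\infty} \pdfe(x,y)\,dy\right)^{s} dx,
\]
and to bound the inner integral $\int_x^\infty \pdfe(x,y)\,dy$ uniformly over $x$. Recall that this inner integral is exactly the probability that one fixed neighbor of a node whose observed loss equals $x$ has observed loss exceeding $x$; since a node is a local minimum precisely when all $s$ of its neighbors do this, and (given $x$) the $s$ neighbor losses are conditionally independent, the integrand is an honest $s$-th power of a single-neighbor probability.

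The key step is to control that single-neighbor probability with Chebyshev's inequality. Under the model, the observed loss of a neighbor $u$ is $\ell(u)+\epsilon$ with $\epsilon\sim\mathcal{N}(0,\sigma^2)$; since neighboring architectures have, up to a $\sigma$-independent margin $\gamma>0$, comparable or smaller true loss than $v$, the event that $u$'s observed loss lands above $x\approx\ell(v)$ forces $\epsilon$ to deviate from its mean $0$ by at least $\gamma$. Chebyshev's inequality then gives $\int_x^\infty \pdfe(x,y)\,dy \le \sigma^2/\gamma^2$, so the integrand above is at most $\sigma^{2s}/\gamma^{2s}$ uniformly in $x$, and therefore
\[
\E[|\{v \in A \mid \ls^*(v) = v\}|] \;\le\; \frac{n}{\gamma^{2s}}\,\sigma^{2s}\int_{\ell(v^*)}^\infty \pdfn(x)\,dx \;=\; \frac{n}{\gamma^{2s}}\,\sigma^{2s} \;\in\; O(\sigma^{2s}),
\]
with $n$, $s$, and $\gamma$ regarded as fixed constants of the instance.

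The main obstacle is making the ``fixed margin $\gamma$'' step rigorous and uniform in $x$: one needs to extract from the model a bound showing that the mass of $\pdfe(x,\cdot)$ lying above $x$ is $O(\sigma^2)$ for \emph{every} $x$ in the support, which is delicate near the global optimum, where a near-optimal node cannot have all neighbors strictly better. I would handle this by either restricting the outer integral to $x$ bounded away from $\ell(v^*)$ by an $O(1)$ amount (the excluded region contributing only a lower-order additive term absorbed by the $O(\cdot)$), or by first conditioning on the node's own noise being small, an event of probability $1-O(\sigma^2)$ by Chebyshev, and bounding the complementary contribution directly. A secondary point to check is that the expectation in Theorem~\ref{thm:prob_opt} is taken over exactly the randomness of the Gaussian model, so that the conditional independence of the $s$ neighbor losses -- and hence the legitimacy of raising the Chebyshev bound to the $s$-th power -- is justified.
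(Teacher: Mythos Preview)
Your strategy—feed the noisy model into Theorem~\ref{thm:prob_opt} and extract $\sigma^{2s}$ via Chebyshev—is the paper's as well, but you apply Chebyshev at the wrong level. You try to bound the entire inner integral $\int_x^\infty \pdfe(x,y)\,dy$ by $\sigma^2/\gamma^2$ uniformly in $x$, which forces you to assume a fixed margin $\gamma>0$ with $\ell(u)\le \ell(v)-\gamma$ for every neighbor $u$ of every node $v$. That assumption is false: any node that is a local minimum of the \emph{noiseless} landscape (in particular $v^*$ itself) has all neighbors with \emph{larger} true loss, so for such a neighbor the event ``observed loss exceeds $x$'' holds with probability close to $1$, not $O(\sigma^2)$. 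The failure is not localized near $\ell(v^*)$—it occurs at every node possessing at least one worse neighbor—so neither of your proposed patches (truncating the outer integral away from the optimum, or conditioning on the node's own noise being small) rescues the argument.

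The paper instead reads $\pdfe(x,\cdot)$ as the density of the neighbor's \emph{true} loss, conditions on that value $y$, and applies Chebyshev only to the Gaussian noise $\epsilon$: $P(\epsilon\ge x-y)\le \sigma^2/\bigl(2(x-y)^2\bigr)$. The threshold $x-y$ varies with $y$, so no uniform margin is required; raising to the $s$-th power and integrating pulls out $\sigma^{2s}$ cleanly, leaving the $\sigma$-independent double integral $n\int\pdfn(x)\int\pdfe(x,y)\bigl(2(x-y)^2\bigr)^{-s}dy\,dx$ as the implied constant.
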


\begin{proof}

Recall Chebyshev's inequality: for a Gaussian random variable
$\epsilon\sim\mathcal{N}(\mu,\sigma^2)$, we have 
$P(|\epsilon| \geq L) \leq \frac{\sigma^2}{L^2}.$
If $\epsilon\sim\mathcal{N}(0,\sigma^2)$, then $\epsilon$ is a symmetric
random variable, so we have
$P(\epsilon \geq L) \leq \frac{\sigma^2}{2L^2}.$
We prove the statement using a modification to the first part of the proof
of Theorem~\ref{thm:prob_opt}.

\begin{align*}
&\E[|\{v \in A\mid \ls^*(v) = v\}|]\\
&= n \cdot \int_{-\infty}^\infty pdf_n(x)\\
&\quad\cdot\P(x < \ell(u)+\epsilon~\forall u\text{ s.t. }(u,v)\in E_N,x=\ell(v))dx\\
&=n \cdot \int_{-\infty}^\infty pdf_n(x) \int_{-\infty}^\infty pdf_e(x,y)\\
&\quad\cdot\left(\P(\epsilon\geq x-y)\right)^s dy dx\\
&\leq n \cdot \int_{-\infty}^\infty pdf_n(x) \int_{-\infty}^\infty pdf_e(x,y)\left(\frac{\sigma^2}{2(x-y)^2}\right)^s dy dx\\
&=\sigma^{2s}\cdot n \cdot \int_{-\infty}^\infty pdf_n(x) \\
&\quad\cdot\int_{-\infty}^\infty pdf_e(x,y)\left(\frac{1}{2(x-y)^2}\right)^s dy dx\\
\end{align*}
\end{proof}

Next, we restate and prove Lemma~\ref{lem:gen_eqns}, which
gives a formula for the $k$'th preimage of the local search function.

\geneqns*

\begin{proof}[\textbf{Proof.}]
The function $\ls^{-1}(v) \in 2^A$ returns a set of nodes which form the 
preimage of node $v \in A$, namely, the set of all neighbors $u \in N(v)$ 
with higher validation loss than $v$, and whose neighbors $w \in N(u)$ excluding $v$ 
have higher validation loss than $\ell(v)$. Formally,
\begin{align*}
&\ls^{-1}(v)\\
&= \{u \in A \mid \ls(u) = v\} \\
&= \{u \in A \mid (v, u) \in E_N, \ell(v) < \ell(u),\\
&\quad~~\{v' \in A \backslash \{v\} \mid (v', u) \in E_N, \ell(v') < \ell(v)\} = \emptyset\}.
\end{align*}

Let $\ls_v^{-1}: A \rightarrow \R$ denote a random variable where $\ls_v^{-1}(u) = \ind\{u \in \ls^{-1}(v)\}$. The probability distribution for $\ls_v^{-1}$ gives 
the probability that a neighbor of $v$ is in the preimage of $v$. 
We can multiply this probability by $|N(v)| = s$ to express the expected number of nodes in the preimage of $v$.

\begin{align*}
&\E[|\ls^{-1}(v)|]\\
&= s \cdot\P(\{\ls_v^{-1} = 1\}) \\
&= s \int_{\ell(v)}^{\infty} \pdfe(\ell(v),y) \left(\int_{l(v)}^{\infty} \pdfe(y,z) dz\right)^{s-1} dy.
\end{align*}

Note that the inner integral is raised to the power of $s-1$, not $s$,
so as not to double count node $v$.
We can use this result to find the preimage of node $v$ after $m$ steps. Let $\ls_v^{-m}: A \rightarrow \R$ denote a random variable where

\begin{align*}
\ls_v^{-m}(u) &= \ind\{u \in \ls^{-m}(v)\} \\
&= \ind\{\forall w \in \ls^{-1}(v), u \in \ls^{-(m-1)}(w)\}. \\
\end{align*}

Following a similar argument as above, we 
compute the expected size of the m'th preimage set.

\begin{align*}
&\E[|\ls^{-m}(v)|]\\
&= b_{k-1}\cdot\E[|\ls^{-1}(v)|]\\
&\quad\cdot \E[|\{\forall w \in A \mid \forall u \in \ls^{-1}(v), 
\ls_u^{-(m-1)}(w) = 1\}|] \\
&= b_{k-1}\cdot\E[|\ls^{-1}(v)|]\\
&\quad\cdot\left( \frac{\int_{\ell(v)}^\infty \pdfe(\ell(v), y) \E[|\ls^{-(m-1)}(y)|] dy}{\int_{\ell(v)}^{\infty}\pdfe(\ell(v), y) dy}\right)
\end{align*}
\end{proof}

\paragraph{Closed-form solution for single-variate PDFs.}
Now we give the details for Lemma~\ref{lem:full_uniform}.
We start with a lemma that will help us prove Lemma~\ref{lem:full_uniform}.
This lemma uses induction to derive a closed-form solution to
Lemma~\ref{lem:gen_eqns} in the case where $\pdfe(x,y)$ is independent of $x$.

\begin{restatable}{relem}{exindependent}\label{lem:exindependent}
Assume there exists a function $g$ such that $\pdfe(x,y)=g(y)$ for all $x$.
Given $v\in A$, for $k\geq 1$, 
\begin{equation*}
    \E[|\ls^{-k}(v)|]=s^k\left(\int_{\ell(v)}^\infty g(y)dy\right)^{sk} \cdot\prod_{i=0}^{k-1}\frac{b_i}{is+1}.
\end{equation*}
\end{restatable}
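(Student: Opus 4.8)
The plan is to prove the closed form by induction on $k$, using the recursive equations from Lemma~\ref{lem:gen_eqns} together with the simplifying assumption $\pdfe(x,y)=g(y)$. Write $G(x)=\int_x^\infty g(y)\,dy$ for brevity (this quantity appears repeatedly and does not depend on the first argument). First I would establish the base case $k=1$: plugging $\pdfe(\ell(v),y)=g(y)$ into Equation~\eqref{eq:e1} gives
\begin{equation*}
\E[|\ls^{-1}(v)|]=s\int_{\ell(v)}^\infty g(y)\left(\int_{\ell(v)}^\infty g(z)\,dz\right)^{s-1}dy
= s\,\bigl(G(\ell(v))\bigr)^{s-1}\int_{\ell(v)}^\infty g(y)\,dy = s\,G(\ell(v))^s,
\end{equation*}
since the inner integral $\int_{\ell(v)}^\infty g(z)\,dz = G(\ell(v))$ is a constant that pulls out of the $y$-integral, and the remaining $\int_{\ell(v)}^\infty g(y)\,dy$ contributes one more factor of $G(\ell(v))$. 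Because $b_0=1$ (the empty product convention, consistent with $b_1=1$ for any graph as noted in the appendix) and $\frac{b_0}{0\cdot s+1}=1$, this matches the claimed formula at $k=1$.

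Next I would carry out the inductive step. Assume the formula holds for $k-1$, i.e.\ $\E[|\ls^{-(k-1)}(y)|]=s^{k-1}G(y)^{s(k-1)}\prod_{i=0}^{k-2}\frac{b_i}{is+1}$ for every node $y$. Substituting into Equation~\eqref{eq:ex}, the denominator $\int_{\ell(v)}^\infty \pdfe(\ell(v),y)\,dy = \int_{\ell(v)}^\infty g(y)\,dy = G(\ell(v))$, and the numerator becomes
\begin{equation*}
\int_{\ell(v)}^\infty g(y)\cdot s^{k-1}G(y)^{s(k-1)}\Bigl(\textstyle\prod_{i=0}^{k-2}\frac{b_i}{is+1}\Bigr)dy
= s^{k-1}\Bigl(\textstyle\prod_{i=0}^{k-2}\frac{b_i}{is+1}\Bigr)\int_{\ell(v)}^\infty g(y)\,G(y)^{s(k-1)}\,dy.
\end{equation*}
The key integral $\int_{\ell(v)}^\infty g(y)G(y)^{s(k-1)}\,dy$ I would evaluate by the substitution $G'(y)=-g(y)$, which gives $\int_{\ell(v)}^\infty g(y)G(y)^{m}\,dy = \frac{G(\ell(v))^{m+1}}{m+1}$ with $m=s(k-1)$; this is where the factor $\frac{1}{(k-1)s+1}$ enters. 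Dividing by $G(\ell(v))$ and multiplying by $b_{k-1}\E[|\ls^{-1}(v)|]=b_{k-1}\cdot s\,G(\ell(v))^s$, all the powers of $G(\ell(v))$ combine as $G(\ell(v))^{s}\cdot G(\ell(v))^{s(k-1)+1}/G(\ell(v)) = G(\ell(v))^{sk}$, and the constants collect to $s^k\,b_{k-1}\cdot\frac{1}{(k-1)s+1}\prod_{i=0}^{k-2}\frac{b_i}{is+1} = s^k\prod_{i=0}^{k-1}\frac{b_i}{is+1}$, completing the induction.

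I do not expect a serious obstacle here — the argument is a bookkeeping induction — but the step requiring the most care is tracking the exponents of $G(\ell(v))$ and confirming that the product of $b_i/(is+1)$ terms telescopes correctly, in particular that the new factor contributed by the recursion is exactly the $i=k-1$ term $\frac{b_{k-1}}{(k-1)s+1}$ and not an off-by-one shift. A secondary point worth a sentence is justifying the clean antiderivative identity $\int_{\ell(v)}^\infty g(y)G(y)^m\,dy = G(\ell(v))^{m+1}/(m+1)$, which uses that $G(\infty)=0$ (the support assumption $\int \pdfn = 1$ over $[\ell(v^*),\infty)$ and the analogous normalization for $g$). Once the lemma is in hand, the uniform case $g\equiv 1$ on $[0,1]$ gives $G(x)=1-x$ and recovers the expression used in the proof sketch of Lemma~\ref{lem:full_uniform}.
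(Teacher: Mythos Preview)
Your proposal is correct and follows essentially the same route as the paper: induction on $k$ using the two recursive identities of Lemma~\ref{lem:gen_eqns}, with the base case reducing because the inner integral is constant in $y$, and the inductive step hinging on the antiderivative identity $\int_{\ell(v)}^\infty g(y)G(y)^{m}\,dy = G(\ell(v))^{m+1}/(m+1)$ obtained from $G'=-g$. The only minor wording slip is calling $\prod_{i=0}^{0}$ an ``empty product'' --- it is a one-term product, so what you really need (and correctly use) is the convention $b_0=1$, which the paper leaves implicit.
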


\begin{proof}[\textbf{Proof.}]
Given $v\in A$, 
\begin{align*}
&\E[|\ls^{-1}(v)|]\\
&=s\int_{\ell(v)}^\infty \pdfe(\ell(v),y)
\left(\int_{\ell(v)}^\infty \pdfe(y,z)dz\right)^{s-1}dy\\
&=s\int_{\ell(v)}^\infty g(y)\left(\int_{\ell(v)}^\infty g(z)dz\right)^{s-1}dy\\
&=s\left(\int_{\ell(v)}^\infty g(y)dy\right)^s,
\end{align*}
where the first equality follows from Lemma~\ref{lem:gen_eqns}.
Now we give a proof by induction for the closed-form equation.
The base case, $m=1$, is proven above.
Given an integer $m\geq 1$, assume that
\begin{equation*}
\E[|\ls^{-m}(v)|]=s^m\left(\int_{\ell(v)}^\infty g(y)dy\right)^{sn}
\cdot\prod_{i=0}^{m-1}\frac{b_i}{is+1}.
\end{equation*}
Then
\begin{align*}
&\E[|\ls^{-(m+1)}(v)|]\\
&=b_n\cdot\E[|\ls^{-1}(v)|]\cdot 
\left(\int_{\ell(v)}^\infty g(y)dy\right)^{-1}\\
&\quad\cdot\int_{\ell(v)}^\infty g(y)\E[|\ls^{-m}(y)|]dy\\
&=b_n\cdot s\left(\int_{\ell(v)}^\infty g(y)dy\right)^s
\left(\int_{\ell(v)}^\infty g(y)dy\right)^{-1}\\
&\quad\cdot\int_{\ell(v)}^\infty g(y)\cdot \E[|\ls^{-m}(y)|]dy\\
&=b_n\cdot s\left(\int_{\ell(v)}^\infty g(y)dy\right)^{s-1}
\int_{\ell(v)}^\infty g(y)\\
&\quad\cdot s^m\left(\int_{y}^\infty g(z)dz\right)^{sn}
\cdot\prod_{i=0}^{m-1}\frac{b_i}{is+1}\cdot dy\\
&=b_n\cdot s^{m+1}\left(\int_{\ell(v)}^\infty g(y)dy\right)^{s-1}\\
&\quad\cdot\prod_{i=0}^{m-1}\frac{b_i}{is+1}
\int_{\ell(v)}^\infty g(y)\left(\int_{y}^\infty g(z)dz\right)^{sn} dy
\end{align*}
\begin{align*}
&=b_n\cdot s^{m+1}\left(\int_{\ell(v)}^\infty g(y)dy\right)^{s-1}\\
&\quad\cdot\prod_{i=0}^{m-1}\frac{b_i}{is+1}
\left(\int_{\ell(v)}^\infty g(z)dz\right)^{sn+1}\frac{1}{sn+1}\\
&=s^{m+1}\left(\int_{\ell(v)}^\infty g(y)dy\right)^{s(m+1)}
\cdot\prod_{i=0}^{m}\frac{b_i}{is+1}.
\end{align*}

In the first equality, we used Lemma~\ref{lem:gen_eqns}, and
in the fourth equality, we used the fact that
\begin{align*}
&\frac{\partial}{\partial y}\left(\int_{y}^\infty g(z)dz\right)^{sm+1}\\
&\quad=g(y)\left(\int_{y}^\infty g(z)dz\right)^{sm}(sm+1).
\end{align*}

This concludes the proof.
\end{proof}

Next, we prove a lemma which gives strong approximation guarantees
on the size of the full preimage of an architecture, again assuming
that $\pdfe(x,y)$ is independent if $x$.
For this lemma , we need to assume that $n$ is large compared to $s$.
However, this is the only lemma that assumes $n$ is large. In particular,
Lemma~\ref{lem:full_uniform} will not need this assumption.

\begin{restatable}{relem}{preimages}\label{lem:preimages}
Assume there exists $g$ such that $\pdfe(x,y)=g(y)$ for all $x$.
Denote $G(x)=\int_{x}^\infty g(y)dy$.
Given $s$, there exists $N$ such that for all $n>N$,
for all $v$, we have
\begin{align*}
&1+s\cdot G(\ell(v))^s e^{\frac{s}{s+1} G(\ell(v))^s}\\
&\leq \E[|\ls^{-*}(v)|]
\leq 1+s\cdot G(\ell(v))^s\cdot e^{G(\ell(v))^s}.
\end{align*}
\end{restatable}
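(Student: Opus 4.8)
The plan is to start from the closed-form expression for $\E[|\ls^{-k}(v)|]$ provided by Lemma~\ref{lem:exindependent}, namely $\E[|\ls^{-k}(v)|] = s^k\, G(\ell(v))^{sk} \prod_{i=0}^{k-1}\frac{b_i}{is+1}$ where $G(x) = \int_x^\infty g(y)\,dy$, and sum over $k\geq 0$ to obtain the full preimage $\E[|\ls^{-*}(v)|] = 1 + \sum_{k\geq 1} \E[|\ls^{-k}(v)|]$ (the $k=0$ term contributes the node $v$ itself). Abbreviating $G = G(\ell(v)) \in [0,1]$, the $k$-th term is $s^k G^{sk}\prod_{i=0}^{k-1}\frac{b_i}{is+1}$. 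Since $b_0 = 1$ and all branching fractions satisfy $0 \leq b_i \leq 1$, this term is at most $\frac{s^k G^{sk}}{\prod_{i=1}^{k-1}(is+1)}$ and at least $0$; the job is to bound the resulting series $1 + \sum_{k\geq 1} s^k G^{sk} \prod_{i=0}^{k-1}\frac{b_i}{is+1}$ from above and below by the two stated exponential expressions.

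For the \textbf{upper bound}, I would use $b_i \leq 1$ for all $i$ and compare $\prod_{i=1}^{k-1}(is+1)$ against $k!$: since $is + 1 \geq i$ for $s\geq 1$ (indeed $is+1 > i$), we get $\prod_{i=1}^{k-1}(is+1) \geq (k-1)!$, so the $k$-th term for $k\geq 1$ is at most $\frac{s^k G^{sk}}{(k-1)!} = s G^s \cdot \frac{(sG^s)^{k-1}}{(k-1)!}$. Summing over $k\geq 1$ and re-indexing gives $\sum_{k\geq 1}\frac{s^k G^{sk}}{(k-1)!} = sG^s\sum_{j\geq 0}\frac{(sG^s)^j}{j!} = sG^s e^{sG^s}$. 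Hmm — this gives $e^{sG^s}$, not $e^{G^s}$; I would instead bound more carefully using $\prod_{i=1}^{k-1}(is+1) \geq \prod_{i=1}^{k-1} is = s^{k-1}(k-1)!$, which yields the $k$-th term $\leq \frac{s^k G^{sk}}{s^{k-1}(k-1)!} = s \cdot \frac{G^{sk}}{(k-1)!}$, and hence $\sum_{k\geq 1} \leq s\sum_{k\geq 1}\frac{(G^s)^k}{(k-1)!} = sG^s\sum_{j\geq 0}\frac{(G^s)^j}{j!} = sG^s e^{G^s}$, giving exactly the claimed upper bound $1 + sG^s e^{G^s}$.

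For the \textbf{lower bound}, I would keep only enough of the product to get the stated constant: since each $b_i \geq$ some fixed quantity is \emph{not} true in general (for a clique $b_i = 0$ for $i>1$), the honest route is to note the lower bound must hold \emph{for the specific graph}, so I'd use whatever lower bound on $\prod b_i$ the hypotheses afford — but more likely the intended argument uses $is+1 \leq (i+1)s$ hence $\frac{1}{is+1} \geq \frac{1}{(i+1)s}$, and a telescoping/comparison showing $\prod_{i=0}^{k-1}\frac{b_i}{is+1} \geq \frac{1}{s^k}\cdot\frac{s}{s+1}\cdot(\text{something}) \geq \frac{1}{k!}\cdot\frac{s}{s+1}$ type estimate valid once $n$ is large enough that all $k\leq$ diameter terms are genuinely present with $b_i$ bounded below; this is where the hypothesis "$n$ large compared to $s$" enters — it guarantees the relevant neighborhoods $N_k(v)$ have not yet saturated, so the $b_i$ are bounded away from $0$ up to the index range that matters. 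Summing the truncated-below series then produces $1 + sG^s e^{\frac{s}{s+1}G^s}$.

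\textbf{Main obstacle.} The delicate part is the lower bound: I expect controlling $\prod_{i=0}^{k-1} b_i$ from below to be the crux, since branching fractions can vanish. The role of "$n > N$" is precisely to ensure that for the finite range of $k$ contributing non-negligibly to the sum, the graph still "looks like a tree" (expanding, $b_i$ bounded below), so that replacing $\prod b_i$ by a uniform lower bound and re-summing the exponential series loses only the factor $\frac{s}{s+1}$ in the exponent. Making the choice of $N$ explicit (in terms of $s$ and the growth profile $N_k(v)$) and verifying the tail beyond the diameter contributes nothing is the technical bookkeeping I would need to be careful with.
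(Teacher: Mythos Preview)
Your upper bound is correct and coincides with the paper's argument: both use $b_i\le 1$ together with $is+1\ge is$ to obtain $\prod_{j=1}^{k-1}\frac{s\,b_j}{js+1}\le\frac{1}{(k-1)!}$, and then sum the exponential series to get $sG^s e^{G^s}$.

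The lower bound is where your proposal has a genuine gap. The inequality you suggest, $is+1\le (i+1)s$, gives $\frac{1}{is+1}\ge\frac{1}{(i+1)s}$, but then $\prod_{i=1}^{k-1}(i+1)=k!$ rather than $(k-1)!$, and this route does not produce the constant $\tfrac{s}{s+1}$ in the exponent; you end up with a weaker (and differently shaped) bound. The paper's key step is instead the inequality
\[
\frac{b_j}{js+1}\;\ge\;\frac{1}{j(s+1)}\qquad(j\ge 1),
\]
which after multiplying gives $\prod_{j=1}^{k-1}\frac{s\,b_j}{js+1}\ge\frac{1}{(k-1)!}\bigl(\tfrac{s}{s+1}\bigr)^{k-1}$ and hence exactly $sG^s e^{\frac{s}{s+1}G^s}$ on summation. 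To justify this inequality the paper uses two ingredients you did not isolate: (i) under vertex transitivity the branching fractions are monotone, $b_1\ge b_2\ge\cdots\ge b_{D-1}$; and (ii) for fixed $s$, $b_{D-1}\to 1$ as $n\to\infty$, so one chooses $N$ so that $b_{D-1}>\tfrac{2s+1}{2(s+1)}$ for all $n>N$. Since $\tfrac{js+1}{j(s+1)}\le\tfrac{2s+1}{2(s+1)}$ for every $j\ge 2$, monotonicity yields $b_j\ge b_{D-1}\ge\tfrac{js+1}{j(s+1)}$, which is precisely what is needed (for $j=1$ the inequality holds with equality because $b_1=1$). Your intuition that large $n$ keeps the $b_i$ bounded away from zero is correct, but the specific threshold $\tfrac{2s+1}{2(s+1)}$ and the comparison of $js+1$ against $j(s+1)$ (not $(j+1)s$) are the concrete pieces that make the exponent come out as $\tfrac{s}{s+1}G^s$.
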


\begin{proof}[\textbf{Proof.}]
From Lemma~\ref{lem:exindependent},
we have 
\begin{align}
&\E[|\ls^{-*}(v)|]\nonumber\\
&=\sum_{m=1}^\infty \E[|\ls^{-m}(v)|]\nonumber\\
&=\sum_{m=1}^\infty \left(s^m G(\ell(v))^{sm} 
\cdot\prod_{i=0}^{m-1}\frac{b_i}{is+1}\right).\label{eq:full_preimage}
\end{align}

We start with the upper bound.
For all $j\geq 1$, $\frac{s\cdot b_j}{js+1}
\leq \frac{s}{js+1}\leq\frac{s}{sj}=\frac{1}{j}$
because $0\leq b_j\leq 1$ for all $1\leq j$.
Therefore for all $i$,
\begin{equation*}
\prod_{j=1}^{i-1}\frac{s\cdot b_j}{js+1}\leq\prod_{j=1}^{i-1}\frac{1}{j}
=\frac{1}{(i-1)!}.
\end{equation*}

It follows that
\begin{align*}
\E[|\ls^{-m}(v)|]&=\sum_{i=1}^\infty s^i G(\ell(v))^{si} \prod_{j=0}^{i-1}\frac{b_j}{js+1}\\
&=s\sum_{i=1}^\infty G(\ell(v))^s\prod_{j=1}^{i-1}\frac{s\cdot b_j}{js+1}\\
&\leq s G(\ell(v))^s\sum_{i=1}^\infty\left(\frac{1}{(i-1)!}\cdot G(\ell(v))^s\right)\\
&=s G(\ell(v))^s e^{G(\ell(v))^s}.
\end{align*}
The final equality comes from the well-known Taylor
series $e^x=\sum_{n=0}^\infty \frac{x^n}{n!}$ 
(e.g.\ \citep{abramowitz1948handbook}) evaluated at $x=G(\ell(v))^s$.

Now we prove the lower bound.
$b_1=1$ by definition for all graphs, and for $1<j\leq D$,
$0\leq b_j\leq 1$, where $D$ denotes the diameter of the graph.
(Since $N_D(v)=n$ for all $v$, $b_j$ is meaningless for $j\geq D$.)
Recall that all of our arguments assume vertex transitivity.
It follows that $b_{D-1}\leq b_{D-2}\leq\cdots\leq b_1$.
Now, for a fixed $s$, $b_{D-1}$ approaches 1 as $n$ approaches infinity. 
Therefore, given $s$, there exists $N$ such that for all $n>N$,
$b_{D-1}>\frac{2s+1}{2(s+1)}.$
Then for all $i$,
\begin{align*}
\frac{1}{j(s+1)}&\leq\frac{1}{js+1}\left(\frac{js+1}{j(s+1)}\right)
\leq\frac{1}{js+1}\left(\frac{2s+1}{2(s+1)}\right)\\
&\leq\frac{1}{js+1}\left(b_{D-1}\right)\leq\frac{b_j}{js+1}.
\end{align*}

Therefore,
\begin{equation*}
\frac{s^{i-1}}{(i-1)!(s+1)^{i-1}}
=\prod_{j=1}^{i-1}\frac{s}{j(s+1)}
\leq \prod_{j=1}^{i-1}\frac{s\cdot b_j}{js+1}.
\end{equation*}

It follows that
\begin{align*}
&\E[|\ls^{-*}(v)|]\\
&=\sum_{i=1}^\infty s^i G(\ell(v))^{si} \prod_{j=0}^{i-1}\frac{b_j}{js+1}\\
&=s\sum_{i=1}^\infty G(\ell(v))^{si}\prod_{j=1}^{i-1}\frac{s\cdot b_j}{js+1}\\
&\geq s G(\ell(v))^s\sum_{i=1}^\infty 
\left(\frac{1}{(i-1)!}\cdot\left(\frac{s G(\ell(v))^s}{s+1}\right)^{i-1}\right)\\
&=s G(\ell(v))^s\cdot e^{\frac{s}{s+1}G(\ell(v))^s}.
\end{align*}
The final equality again comes from the Taylor series $e^x$, 
this time evaluated at $x=\frac{s}{s+1}\cdot G(\ell(v))^s.$
\end{proof}

Note that Equation~\ref{eq:full_preimage} does not require the assumption
that $n$ is large. Now we can use Equation~\ref{eq:full_preimage} and
Theorem~\ref{thm:prob_opt} to prove Lemma~\ref{lem:full_uniform}.

\fulluniform*

\begin{proof}[\textbf{Proof.}]

The probability density function of $U([0,1])$ is equal to 1 on $[0,1]$ and 0 otherwise.
Let $\ell(v)=x$.
Then $\int_{x}^\infty \pdfe(x,y)dy=\int_{x}^1 dy=(1-x).$
Using Theorem~\ref{thm:prob_opt},
we have 
\begin{equation*}
\E[|\{v\mid v=\ls^*(v)\}|]
=n\int_{\ell(v^*)}^\infty 1\cdot 
\left(1-x\right)^s dx=\frac{n}{s+1}.
\end{equation*}

Now we plug in Equation~\ref{eq:full_preimage}
to the second part of Theorem~\ref{thm:prob_opt}.
\begin{align*}
&\E[|\{v\mid \ell(\ls^*(v))-\ell(v^*)\leq\epsilon\}|]\\
&=n \int_{\ell(v^*)}^{\ell(v^*)+\epsilon}
1\cdot (1-x)^s \sum_{k=0}^\infty \E[|\ls^{-k}(x)|]dx\\
&=n \int_{\ell(v^*)}^{\ell(v^*)+\epsilon}
(1-x)^s \sum_{k=0}^\infty \left(s^k(1-x)^{sk} 
\prod_{i=0}^{k-1}\frac{b_i}{is+1}\right)dx\\
&=n \sum_{k=0}^\infty\left(\frac{s^k\left(1-(1-\epsilon)^{(k+1)s+1}\right)}{(k+1)s+1}
\cdot\prod_{i=0}^{k-1}\frac{b_i}{is+1}\right).
\end{align*}
This concludes the proof.
\end{proof} 


\end{document}